\def \xb {\mathbf{x}}
\def \cX {\mathcal{X}}
\def \btheta {{\bm \theta}}
\def \EE {\mathbb{E}}
\def \gb {\mathbf{g}}
\def \hgb {\hat \gb}
\def \bbf {{\bar f}}
\def \hxb {\hat \xb}
\def \cB {\mathcal{B}}
\newcommand{\la}{\langle}
\newcommand{\ra}{\rangle}
\DeclareMathOperator*{\argmax}{argmax}
\newtheorem{theorem}{Theorem}
\newtheorem{lemma}{Lemma}
\newtheorem{assumption}{Assumption}
\icmltitlerunning{On the Convergence and Robustness of Adversarial Training}
\begin{document}

\twocolumn[
\icmltitle{On the Convergence and Robustness of Adversarial Training}



\icmlsetsymbol{equal}{*}

\begin{icmlauthorlist}
\icmlauthor{Yisen Wang}{equal,1}
\icmlauthor{Xingjun Ma}{equal,2}
\icmlauthor{James Bailey}{2}
\icmlauthor{Jinfeng Yi}{1}
\icmlauthor{Bowen Zhou}{1}
\icmlauthor{Quanquan Gu}{3}
\end{icmlauthorlist}

\icmlaffiliation{1}{JD.com}
\icmlaffiliation{2}{The University of Melbourne}
\icmlaffiliation{3}{The University of California, Los Angeles}

\icmlcorrespondingauthor{Quanquan Gu}{qgu@cs.ucla.edu}

\icmlkeywords{Adversarial Examples, Adversarial Defense, Adversarial Training, Optimization, Deep Neural Networks}

\vskip 0.3in
]



\printAffiliationsAndNotice{\icmlEqualContribution} 

\begin{abstract}
Improving the robustness of deep neural networks (DNNs) to adversarial examples is an important yet challenging problem for secure deep learning. Across existing defense techniques, adversarial training with Projected Gradient Decent (PGD) is amongst the most effective. Adversarial training solves a min-max optimization problem, with the \textit{inner maximization} generating adversarial examples by maximizing the classification loss, and the \textit{outer minimization} finding model parameters by minimizing the loss on adversarial examples generated from the inner maximization. A criterion that measures how well the inner maximization is solved is therefore crucial for adversarial training. In this paper, we propose such a criterion, namely First-Order Stationary Condition for constrained optimization (FOSC), to quantitatively evaluate the convergence quality of adversarial examples found in the inner maximization. With FOSC, we find that to ensure better robustness, it is essential to use adversarial examples with better convergence quality at the \textit{later stages} of training. Yet at the early stages, high convergence quality adversarial examples are not necessary and may even lead to poor robustness. Based on these observations, we propose a \textit{dynamic} training strategy to gradually increase the convergence quality of the generated adversarial examples, which significantly improves the robustness of adversarial training. Our theoretical and empirical results show the effectiveness of the proposed method. 
\end{abstract}

\section{Introduction}\label{sec:intro}
Although deep neural networks (DNNs) have achieved great success in a number of fields such as computer vision \cite{he2016deep} and natural language processing \cite{devlin2018bert}, they are vulnerable to adversarial examples crafted by adding small, human imperceptible adversarial perturbations to normal examples \cite{szegedy2013intriguing, goodfellow2014explaining}. Such vulnerability of DNNs raises security concerns about their practicability in security-sensitive applications such as face recognition \cite{kurakin2016adversarial} and autonomous driving \cite{chen2015deepdriving}. Defense techniques that can improve DNN robustness to adversarial examples have thus become crucial for secure deep learning.

There exist several defense techniques (\textit{i.e.}, ``defense models"), such as input denoising \cite{guo2018countering}, gradient regularization \cite{papernot2017practical}, and adversarial training \cite{madry2017towards}. 
However, many of these defense models  provide either only marginal robustness or have been evaded by new attacks \cite{athalye2018obfuscated}. One defense model that demonstrates moderate robustness, and has thus far not been comprehensively attacked, is \textit{adversarial training} \cite{athalye2018obfuscated}. Given a $C$-class dataset $S = \{(\xb_i^0, y_i)\}_{i=1}^n$ with $\xb_i^0 \in \mathbb{R}^d$ as a normal example in the $d$-dimensional input space and $y_i \in \{1, \cdots, C\}$ as its associated label, the objective of adversarial training is to solve the following \textit{min-max optimization} problem:
\begin{eqnarray}\label{eq:minimax}
    \min_\btheta \frac{1}{n} \sum_{i=1}^n \max_{\|\xb_i - \xb_i^0\|_\infty \leq \epsilon} \ell(h_\btheta (\xb_i), y_i),
\end{eqnarray}
where $h_\btheta: \mathbb{R}^d \to \mathbb{R}^C$ is the DNN function, $\xb_i$ is the adversarial example of $\xb_i^0$, $\ell(h_\btheta(\xb_i),y_i)$ is the loss function on the adversarial example $(\xb_i,y_i)$, and $\epsilon$ is the maximum perturbation constraint\footnote{We only focus on the infinity norm constraint in this paper, but our algorithms and theory apply to other norms as well.}.
The \textit{inner maximization} problem is to find an adversarial example $\xb_i$ within the $\epsilon$-ball around a given normal example $\xb_i^0$ (\textit{i.e.}, $\|\xb_i - \xb_i^0\|_\infty \leq \epsilon$) that maximizes the classification loss $\ell$. It is typically nonconcave with respect to the adversarial example. 
On the other hand, the \textit{outer minimization} problem is to find model parameters that minimize the loss $\ell$ on adversarial examples $\{\xb_i\}_{i=1}^n$ that generated from the inner maximization. This is the problem of training a robust classifier on adversarial examples. Therefore, how well the inner maximization problem is solved directly affects the performance of the outer minimization, \textit{i.e.}, the robustness of the classifier.

Several attack methods have been used to solve the inner maximization problem, such as Fast Gradient Sign Method (FGSM) \cite{goodfellow2014explaining} and Projected Gradient Descent (PGD) \cite{madry2017towards}. However, the degree to which they solve the inner maximization problem has not been thoroughly studied. Without an  appropriate criterion to measure how well the inner maximization is solved, the adversarial training procedure is difficult to monitor or improve. In this paper, we propose such a criterion, namely First-Order Stationary Condition for constrained optimization (FOSC), to measure the convergence quality of the adversarial examples found in the inner maximization. Our proposed FOSC facilitates monitoring and understanding adversarial training through the lens of convergence quality of the inner maximization, and this in turn motivates us to propose an improved training strategy for better robustness.
Our main contributions are as follows:
\begin{itemize}
  \item We propose a principled criterion FOSC to measure the convergence quality of adversarial examples found in the inner maximization problem of adversarial training. It is well-correlated with the adversarial strength of adversarial examples, and is also a good indicator of the robustness of adversarial training.
  \item With FOSC, we find that better robustness of adversarial training is associated with training on adversarial examples with better convergence quality in the \textit{later stages}. However, in the early stages, high convergence quality adversarial examples are not necessary and can even be harmful.
  \item We propose a \textit{dynamic} training strategy to gradually increase the convergence quality of the generated adversarial examples and provide a theoretical guarantee on the overall (min-max) convergence. Experiments show that \textit{dynamic} strategy significantly improves the robustness of adversarial training.
\end{itemize}

\section{Related Work}

\subsection{Adversarial Attack}
Given a normal example $(\xb_i^0, y_i)$ and a DNN $h_\btheta$, the goal of an attacking method is to find an adversarial example $\xb_i$ that remains in the $\epsilon$-ball centered at $\xb^0$ ($\|\xb_i - \xb_i^0\|_\infty \leq \epsilon$) but can fool the DNN to make an incorrect prediction ($h_\btheta(\xb_i) \neq y_i$). A wide range of attacking methods have been proposed for the crafting of adversarial examples. Here, we only mention a selection.

\textbf{Fast Gradient Sign Method (FGSM).} FGSM perturbs normal examples $\xb^0$ for one step ($\xb^1$) by the amount $\epsilon$ along the gradient direction \cite{goodfellow2014explaining}:
\begin{equation}
    \xb^1 = \xb^0 + \epsilon \cdot \text{sign}(\nabla_{\xb} \ell(h_{\btheta}(\xb^0), y)).
\end{equation}

\textbf{Projected Gradient Descent (PGD).} PGD perturbs normal example $\xb^0$ for a number of steps $K$ with smaller step size. After each step of perturbation, PGD projects the adversarial example back onto the $\epsilon$-ball of $\xb^0$, if it goes beyond the $\epsilon$-ball \cite{madry2017towards}:
\begin{equation}
\xb^{k} = \Pi \big( \xb^{k-1} + \alpha \cdot \text{sign}(\nabla_{\xb} \ell(h_{\btheta}(\xb^{k-1}), y)) \big),
\end{equation}
where $\alpha$ is the step size, $\Pi(\cdot)$ is the projection function, and $\xb^{k}$ is the adversarial example at the $k$-th step.

There are also other types of attacking methods, \textit{e.g.}, Jacobian-based Saliency Map Attack (JSMA) \citep{papernot2016limitations}, C\&W attack \citep{carlini2017towards} and Frank-Wolfe based attack \citep{chen2018frank}. PGD is regarded as the strongest first-order attack, and C\&W is among the strongest attacks to date.

\subsection{Adversarial Defense}
A number of defense models have been developed such as defensive distillation \citep{papernot2016distillation}, feature analysis \citep{xu2017feature, ma2018characterizing}, input denoising \cite{guo2018countering,liao2018defense,samangouei2018defensegan}, gradient regularization \cite{gu2014towards, papernot2017practical, tramer2017ensemble, ross2018improving}, model compression \cite{liu2018security,dascompression,rakin2018defend} and adversarial training \cite{goodfellow2014explaining, nokland2015improving, madry2017towards}, among which adversarial training is the most effective. 

Adversarial training improves the model robustness by training on adversarial examples generated by FGSM and PGD \cite{goodfellow2014explaining, madry2017towards}. 
\citeauthor{tramer2017ensemble} (\citeyear{tramer2017ensemble}) proposed an ensemble adversarial training on adversarial examples generated from a number of pretrained models. \citeauthor{kolter2017provable} (\citeyear{kolter2017provable}) developed a provable robust model that minimizes worst-case loss over a convex outer region. In a recent study by \cite{athalye2018obfuscated}, adversarial training on PGD adversarial examples was demonstrated to be the state-of-of-art defense model. Several improvements of PGD adversarial training have also been proposed, such as Lipschitz regularization \cite{cisse2017parseval, hein2017formal, yan2018deep, farnia2018generalizable}, and curriculum adversarial training \cite{cai2018curriculum}.

Despite these studies, a deeper understanding of adversarial training and a clear direction for further improvements is largely missing. The inner maximization problem in Eq. \eqref{eq:minimax} lacks an effective criterion that can quantitatively measure the convergence quality of training adversarial examples generated by different attacking methods (which in turn influences the analysis of the whole min-max problem).  In this paper, we propose such a criterion and provide new understanding of the robustness of adversarial training. We design a dynamic training strategy that significantly improves the robustness of the standard PGD adversarial training.

\section{Evaluation of the Inner Maximization}

\subsection{Quantitative Criterion: FOSC}\label{sec:criterion_definition}
In Eq. \eqref{eq:minimax}, the inner maximization problem is a constrained optimization problem, and is in general globally nonconcave. Since the gradient norm of $h$ is not an appropriate criterion for nonconvex/nonconcave constrained optimization problems, inspired by Frank-Wolfe gap \cite{frank1956algorithm}, we propose a First-Order Stationary Condition for constrained optimization (FOSC) as the convergence criterion for the inner maximization problem, which is affine invariant and not tied to any specific choice of norm:
\begin{equation}\label{eq:def_criterion}
    c(\xb^k) = \max_{\xb \in \cX} \la \xb - \xb^k, \nabla_\xb f(\btheta,\xb^k)\ra,
\end{equation}
where $\cX = \{\xb| \|\xb-\xb^0\|_\infty \leq \epsilon \}$ is the input domain of the $\epsilon$-ball around normal example $\xb^0$, $f(\btheta,\xb^k) = \ell(h_\btheta(\xb^k), y)$ and $\la \cdot \ra$ is the inner product. Note that $c(\xb^k) \geq 0$, and a smaller value of $c(\xb^k)$ indicates a better solution of the inner maximization (or equivalently, better convergence quality of the adversarial example $\xb^k$).

The criterion FOSC in Eq. \eqref{eq:def_criterion} can be shown to have the following closed-form solution:
\begin{align*}
    c(\xb^k) &= \max_{\xb \in \cX} \la \xb - \xb^k, \nabla_\xb f(\btheta,\xb^k)\ra \nonumber \\
    & = \max_{\xb \in \cX} \la \xb - \xb^0+ \xb^0-\xb^k, \nabla_\xb f(\btheta,\xb^k)\ra \nonumber \\
    & =  \max_{\xb \in \cX} \la \xb - \xb^0, \nabla_\xb f(\btheta,\xb^k)\ra\nonumber\\
    &\qquad+ \la \xb^k-\xb^0, -\nabla_\xb f(\btheta,\xb^k)\ra \nonumber \\
    & = \epsilon \|\nabla_\xb f(\btheta,\xb^k)\|_1 - \la \xb^k-\xb^0, \nabla_\xb f(\btheta,\xb^k)\ra.
\end{align*}
As an example-wise criterion, $c(\xb^k)$ measures the convergence quality of adversarial example $\xb^k$ with respect to both the perturbation constraint and the loss function. Optimal convergence where $c(\xb^k)=0$ can be achieved when 1) $\nabla f(\btheta,\xb^k)=0$, \textit{i.e.}, $\xb^k$ is a stationary point in the interior of $\cX$; or 2) $\xb^k-\xb^0=\epsilon \cdot \text{sign}(\nabla f(\btheta,\xb^k))$, that is, local maximum point of $f(\btheta,\xb^k)$ is reached on the boundary of $\cX$. The proposed criterion FOSC allows the monitoring of convergence quality of the inner maximization problem, and  provides a new perspective of adversarial training.

\subsection{FOSC View of Adversarial Training}\label{sec:convergence_view}
In this subsection, we will use FOSC to investigate the robustness and learning process of adversarial training. First though, we investigate its correlation with the traditional measures of accuracy and loss.

\textbf{FOSC View of Adversarial Strength.}
We train an 8-layer Convolutional Neural Network (CNN) on CIFAR-10 using 10-step PGD (PGD-10) with step size $\epsilon/4$, maximum perturbation $\epsilon=8/255$, following the standard setting in \citet{madry2017towards}.
We then apply the same PGD-10 attack on CIFAR-10 test images to craft adversarial examples, and divide the crafted adversarial examples into 20 consecutive groups of different convergence levels of FOSC value ranging from 0.0 to 0.1. The test accuracy and average loss of adversarial examples in each group are in Figure \ref{fig:criterion_acc}. We observe FOSC has a linear correlation with both accuracy and loss: the lower the FOSC, the lower (resp. higher) the accuracy (resp. loss).

We further show the intermediate perturbation steps of PGD for 20 randomly selected adversarial examples in Figure \ref{fig:criterion_loss}. As perturbation step increases, FOSC decreases consistently towards 0, while loss increases and stabilizes at a much wider range of values. Compared to the loss, FOSC provides a \textit{comparable} and \textit{consistent} measurement of adversarial strength: the closer the FOSC to 0, the stronger the attack. 

In summary, the proposed FOSC is well correlated with the adversarial strength and also more consistent than the loss, making it a promising tool to monitor adversarial training.

\begin{figure}[!t]
\centering
\begin{subfigure}{.52\linewidth}
  \centering
  \includegraphics[width=\textwidth]{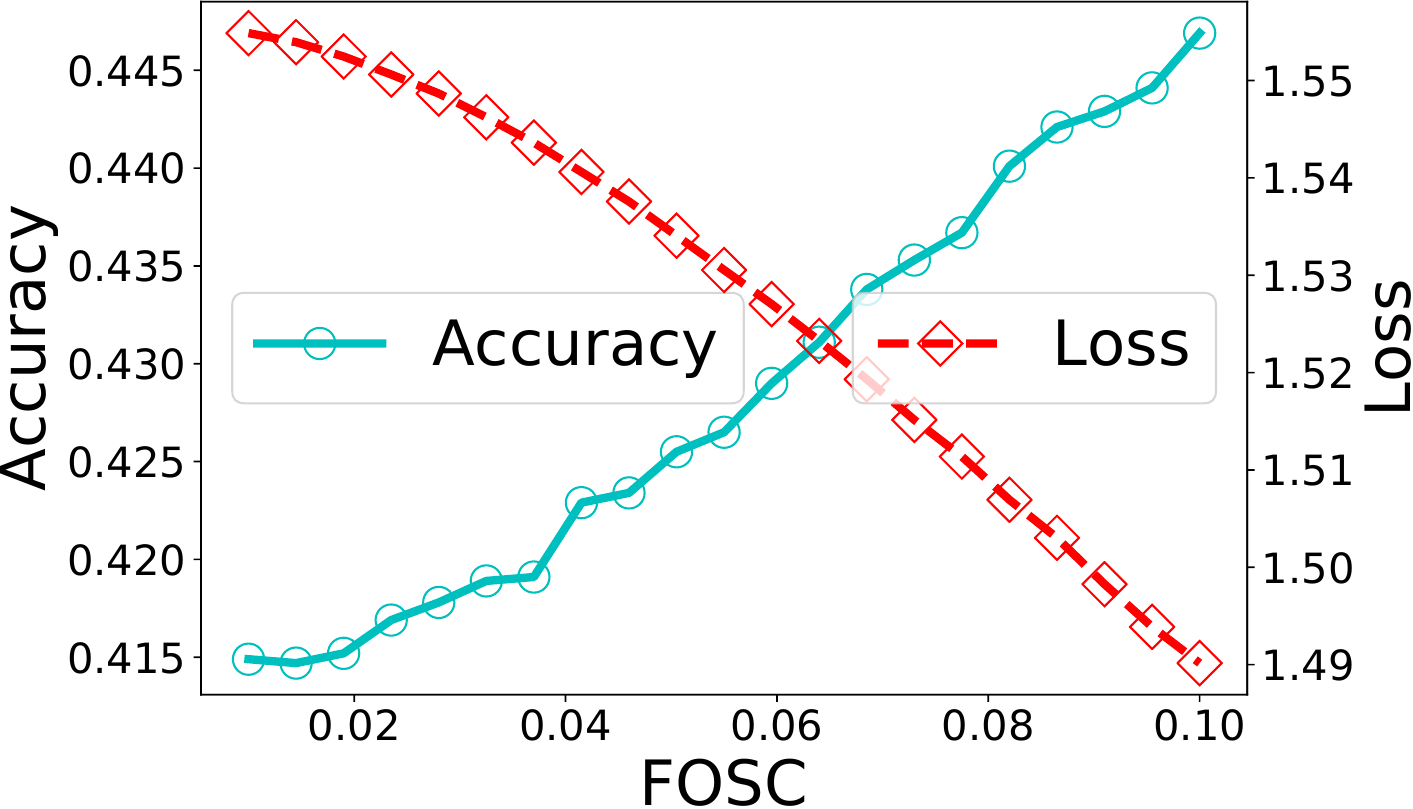}
  \caption{Accuracy, Loss vs. FOSC} 
  \label{fig:criterion_acc}
\end{subfigure}
\begin{subfigure}{.47\linewidth}
  \centering
  \includegraphics[width=\textwidth]{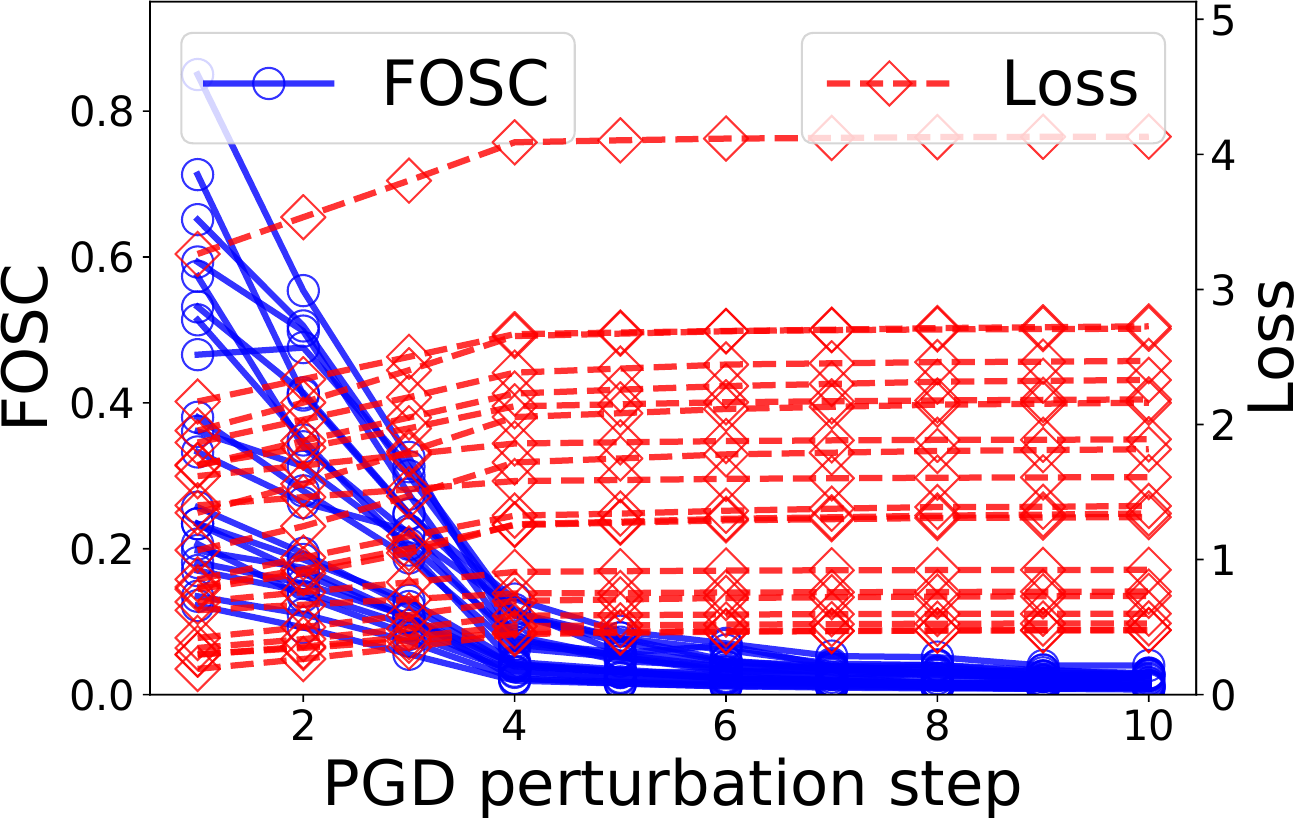}
  \caption{FOSC, Loss vs. Step} 
  \label{fig:criterion_loss}
\end{subfigure}
\caption{The correlation between convergence quality (FOSC) and adversarial strength (accuracy and loss). (a): For PGD-10 CIFAR-10 adversarial examples: the lower the FOSC (x-axis), the lower the accuracy (left y-axis) and the higher the loss (right y-axis). (b): For 20 randomly selected adversarial examples (each line is an example): PGD perturbation step (x-axis) versus FOSC (left y-axis) and loss (right y-axis). }
\label{fig:advs_strength}
\vspace{-0.15 in}
\end{figure}

\textbf{FOSC View of Adversarial Robustness.}
We first investigate the correlation among the final robustness of adversarial training, loss, and FOSC. In particular, we evaluate PGD adversarial training on CIFAR-10 in two settings: 1) varying PGD step size from $\epsilon, \epsilon/2$ to $\epsilon/8$ while fixing step number as 20, and 2) varying PGD step number from 10 to 40 while fixing step size as $\epsilon/6$. 
In each setting, we cross test (white-box) the robustness of the final model against PGD attacks in the same setting on CIFAR-10 test images. For each defense model, we also compute the distributions of FOSC and loss (using Gaussian kernel density estimation \cite{parzen1962estimation}) for the last epoch generated adversarial examples.

As shown in Figure~\ref{fig:motivation}, when varying step size, the best robustness against all test attacks is observed for PGD-$\epsilon/2$ or PGD-$\epsilon/4$ (Figure \ref{fig:pgd_size}), of which the FOSC distributions are more concentrated around 0 (Figure \ref{fig:criterion_size}) but their loss distributions are almost the same (Figure \ref{fig:loss_size}). 
When varying step number, the final robustness values are very similar (Figure \ref{fig:pgd_step}), which is also reflected by the similar FOSC distributions (Figure \ref{fig:criterion_step}), but the loss distributions are again almost the same (Figure \ref{fig:loss_step}). Revisiting Figure \ref{fig:pgd_step} where the step size is $\epsilon/6$, it is notable that increasing PGD steps only brings marginal or no robustness gain when the steps are more than sufficient to reach the surface of the $\epsilon$-ball: 12 steps of $\epsilon/6$ perturbation following the same gradient direction can reach the surface of the $\epsilon$-ball from any starting point. The above observations indicate that FOSC is a more reliable indicator of the final robustness of PGD adversarial training, compared to the loss.

\begin{figure}[!t]
\centering
\begin{subfigure}{.48\linewidth}
  \centering
  \includegraphics[width=\textwidth]{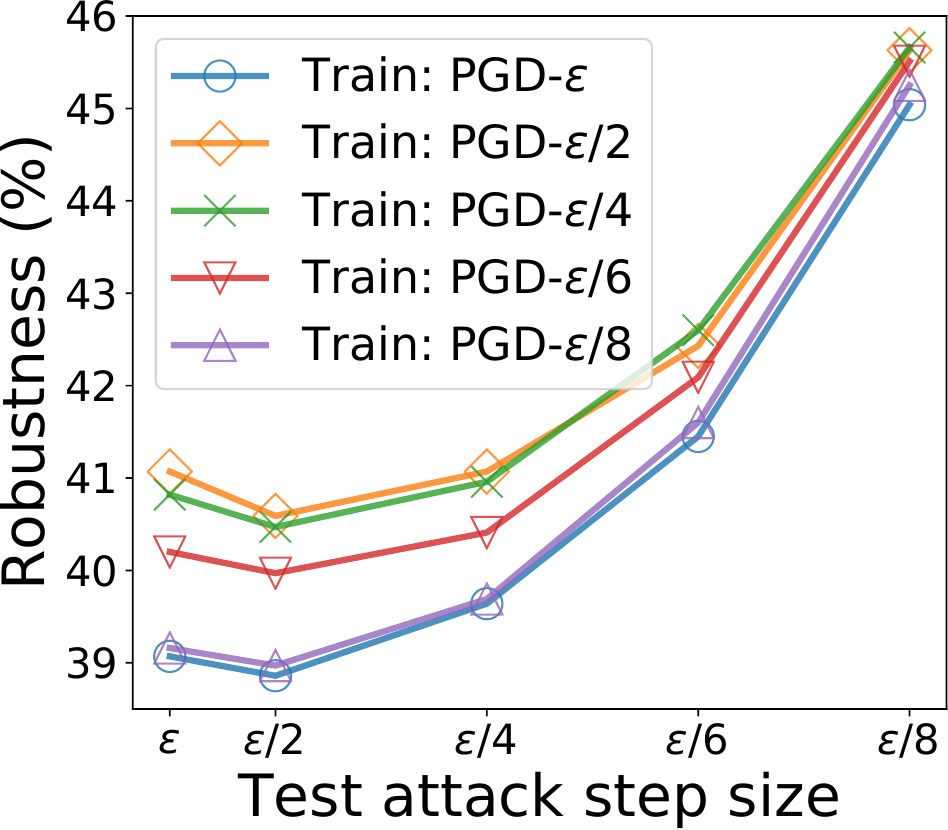}
  \vspace{-0.2 in}
  \caption{Robustness vs. Step size }
  \label{fig:pgd_size}
\end{subfigure}  
\begin{subfigure}{.48\linewidth}
  \centering  
  \includegraphics[width=\textwidth]{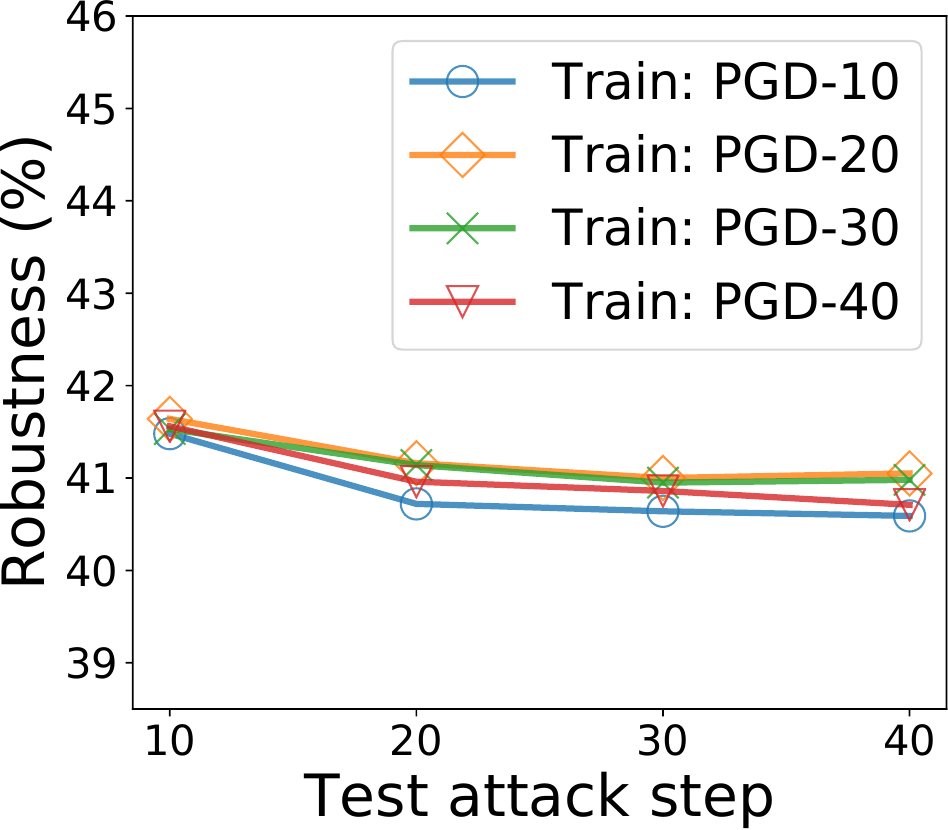}
  \vspace{-0.2 in}
  \caption{Robustness vs. Step number}
  \label{fig:pgd_step}
\end{subfigure} \\
\begin{subfigure}{.48\linewidth}
  \centering
  \includegraphics[width=\textwidth]{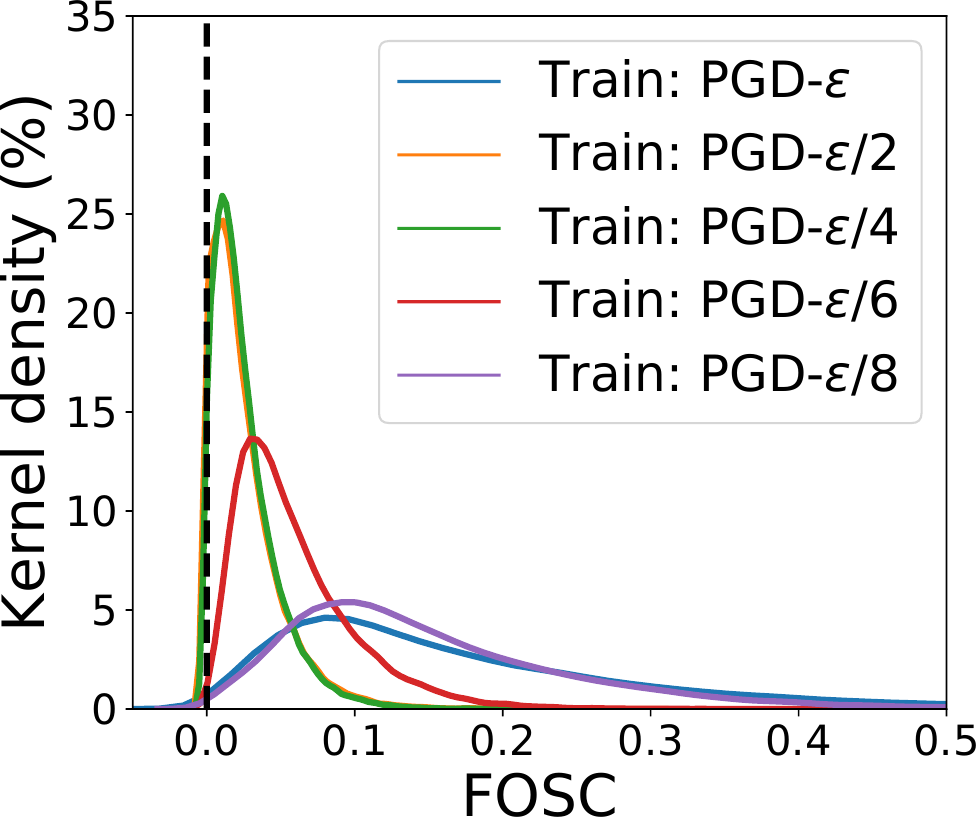}
  \vspace{-0.2 in}
  \caption{FOSC vs. Step size}
  \label{fig:criterion_size}
\end{subfigure}
\begin{subfigure}{.48\linewidth}
  \includegraphics[width=\textwidth]{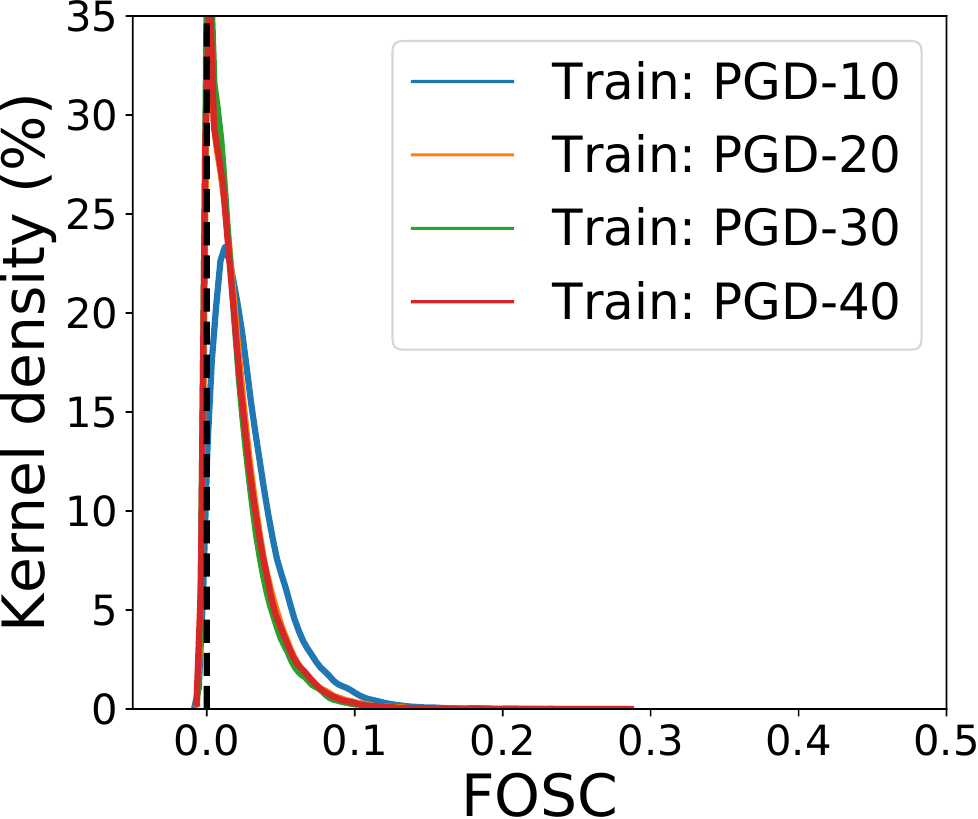}
  \vspace{-0.2 in}
  \caption{FOSC vs. Step number}
  \label{fig:criterion_step}
\end{subfigure}\\
\begin{subfigure}{.48\linewidth}
  \centering
  \includegraphics[width=\textwidth]{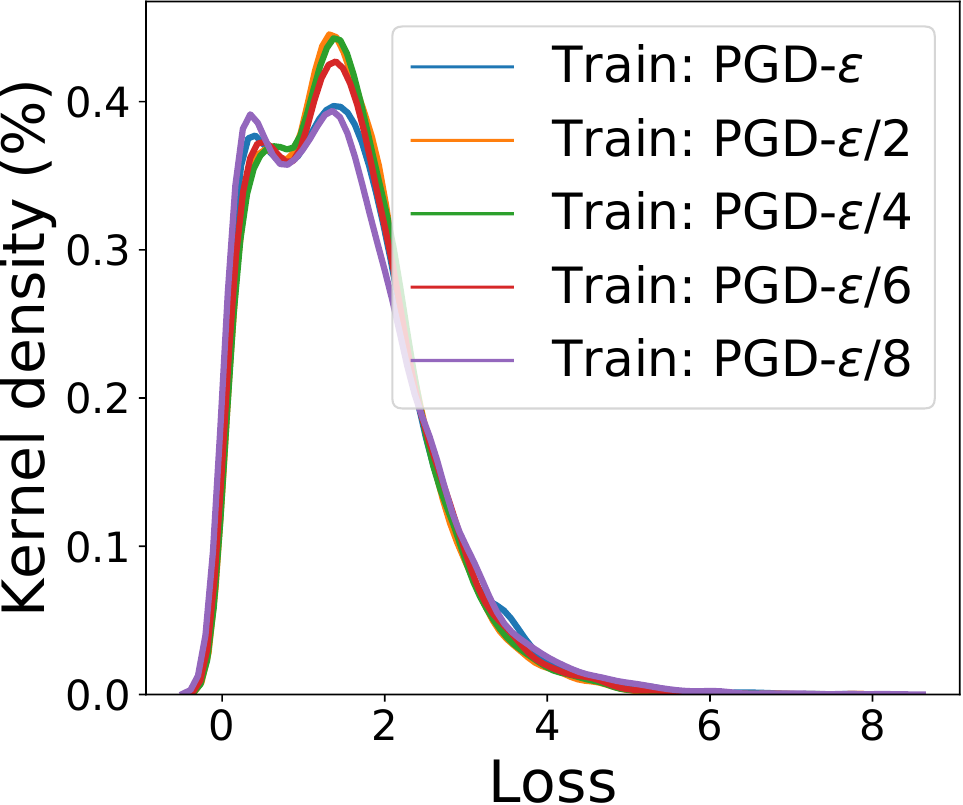}
  \vspace{-0.2 in}
  \caption{Loss vs. Step size}
  \label{fig:loss_size}
\end{subfigure}
\begin{subfigure}{.48\linewidth}
  \includegraphics[width=\textwidth]{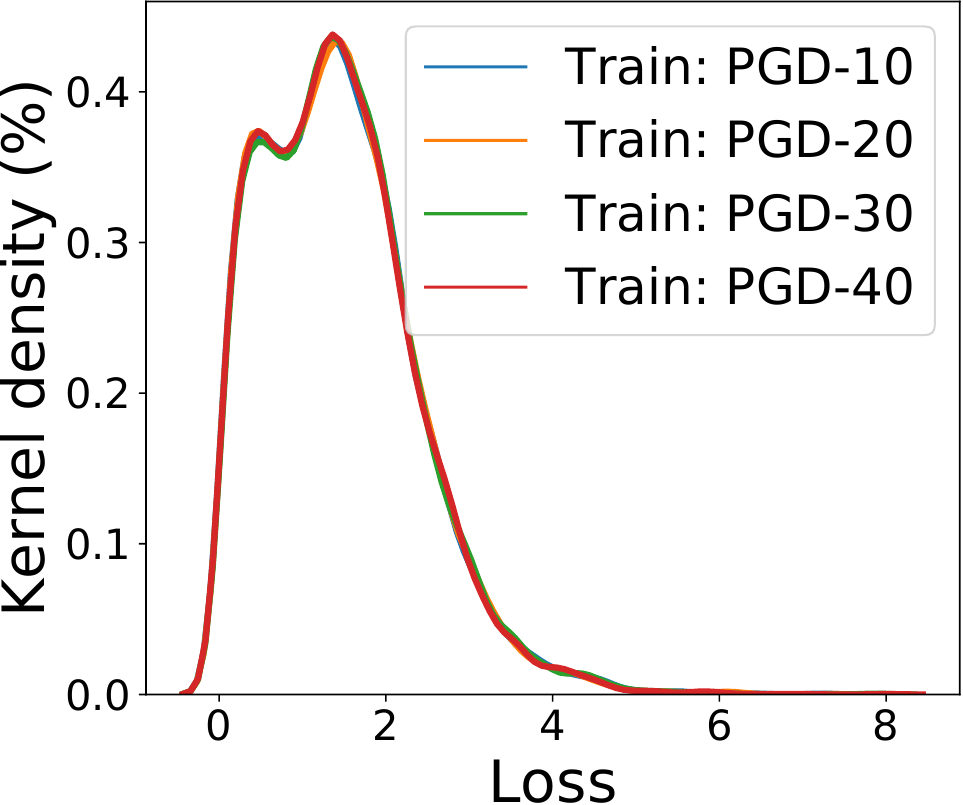}
  \vspace{-0.2 in}
  \caption{Loss vs. Step number}
  \label{fig:loss_step}
\end{subfigure}
\caption{
Robustness of PGD adversarial training with (a) varying step size (fixed step number 20), or (b) varying step number (fixed step size $\epsilon/6$). The FOSC distributions (c)/(d) reflect the robustness of adversarial training in (a)/(b), \textit{i.e.}, the lower the FOSC, the better the robustness. The loss distributions (e)/(f) are almost the same for different settings.}
\label{fig:motivation}
\vspace{-0.15 in}
\end{figure}

\textbf{Rethinking the Adversarial Training Process.}
To provide more insights into the learning process of adversarial training, we show the distributions of FOSC at three distinct learning stages in Figure \ref{fig:criterion_advs_train}: 1) early stage (epoch 10), 2) middle stage (epoch 60), and 3) later stage (epoch 100) (120 epochs in total). We only focus on two defense models: training with 10-step PGD-$\epsilon/4$ and 10-step PGD-$\epsilon/8$ (the best and worst model observed in Figure \ref{fig:pgd_size} respectively). In Figure \ref{fig:pgd_4_8}, for both models, FOSC at the early stage is significantly lower than the following two stages. Thus, at the early stage, both models can easily find high convergence quality adversarial examples for training; however, it becomes more difficult to do so at the following stages. This suggests overfitting to strong PGD adversarial examples at the early stage. To verify this, we replace the first 20 epochs of PGD-$\epsilon/4$ training with a much weaker FGSM (1 step perturbation of size $\epsilon$), denoted as ``FGSM-PGD", and show its robustness and FOSC distribution in Figure \ref{fig:fgsm_pgd_robustness} and \ref{fig:fgsm_pgd_criterion} respectively. We find that by simply using weaker FGSM adversarial examples at the early stage, the final robustness and the convergence quality of adversarial examples found by PGD at the later stage are both significantly improved. The FOSC density between $[0, 0.1]$ is improved to above $35\%$ (green solid line in Figure \ref{fig:fgsm_pgd_criterion}) from less than $30\%$ (green solid line in Figure \ref{fig:pgd_4_8}). This indicates strong PGD attacks are not necessary for the \textit{early stage} of training, or even deteriorate the robustness. In the next section, we will propose a dynamic training strategy to address this issue. 

\begin{figure}[!t]
\begin{subfigure}{.32\linewidth}
  \centering
  \includegraphics[width=\textwidth]{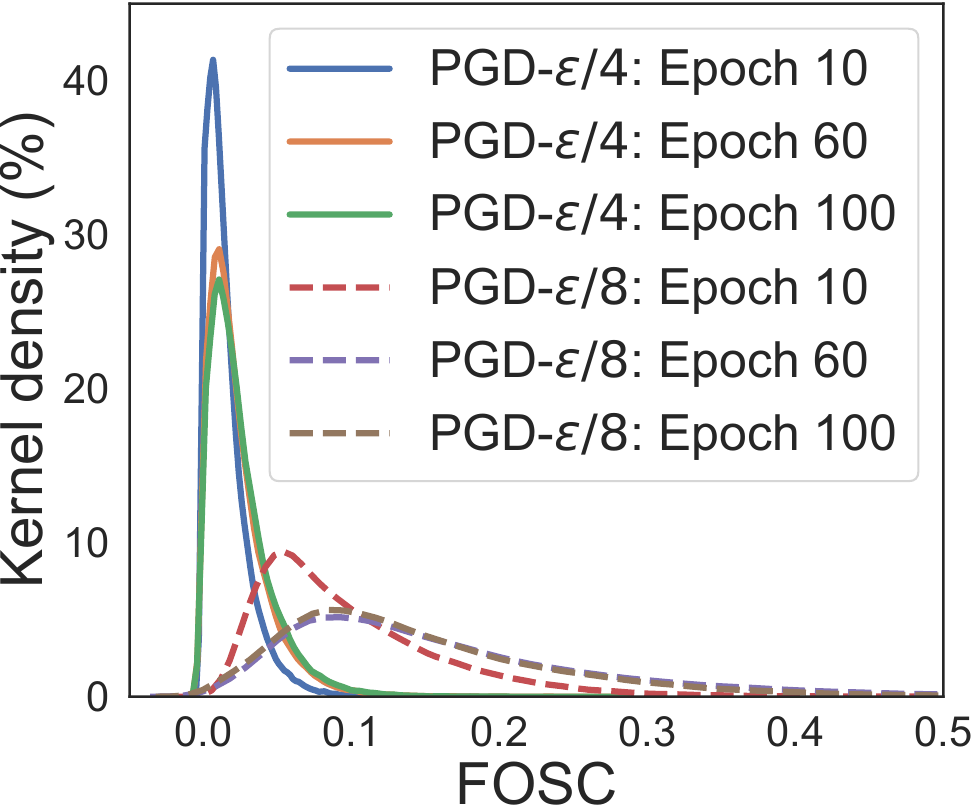}
  \vspace{-0.2 in}
  \caption{FOSC}
  \label{fig:pgd_4_8}
\end{subfigure}
\begin{subfigure}{.32\linewidth}
  \centering
  \includegraphics[width=\textwidth]{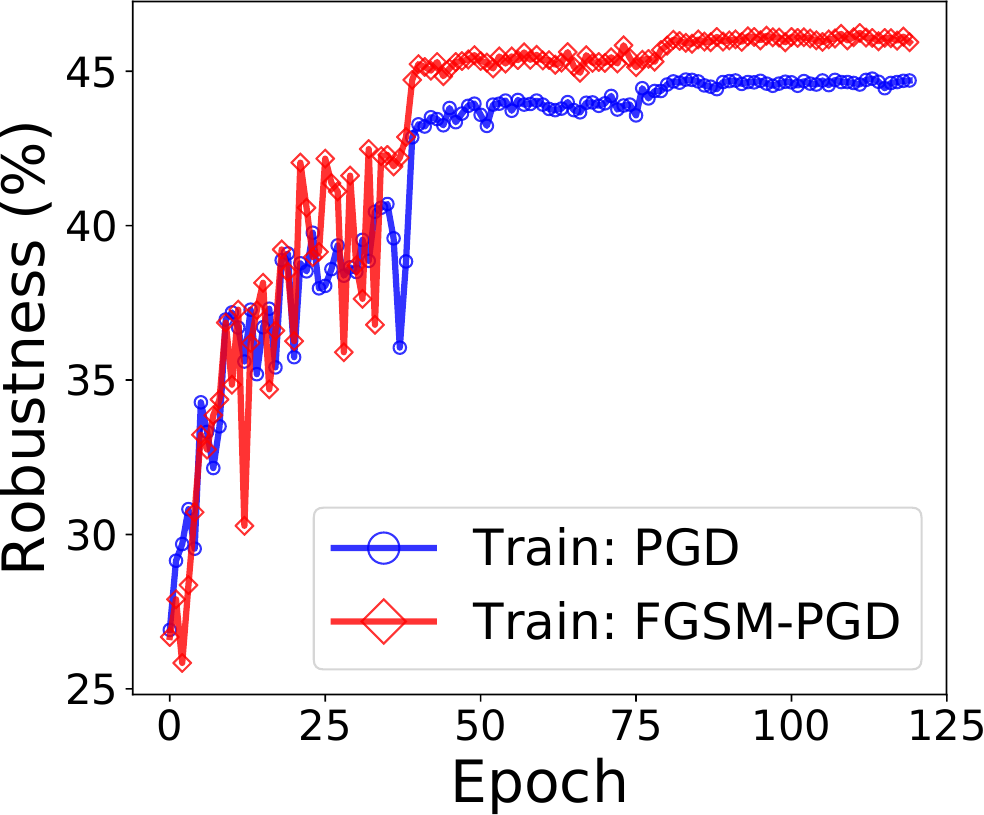}
  \vspace{-0.2 in}
  \caption{Robustness}
  \label{fig:fgsm_pgd_robustness}
\end{subfigure}
\begin{subfigure}{.32\linewidth}
  \includegraphics[width=\textwidth]{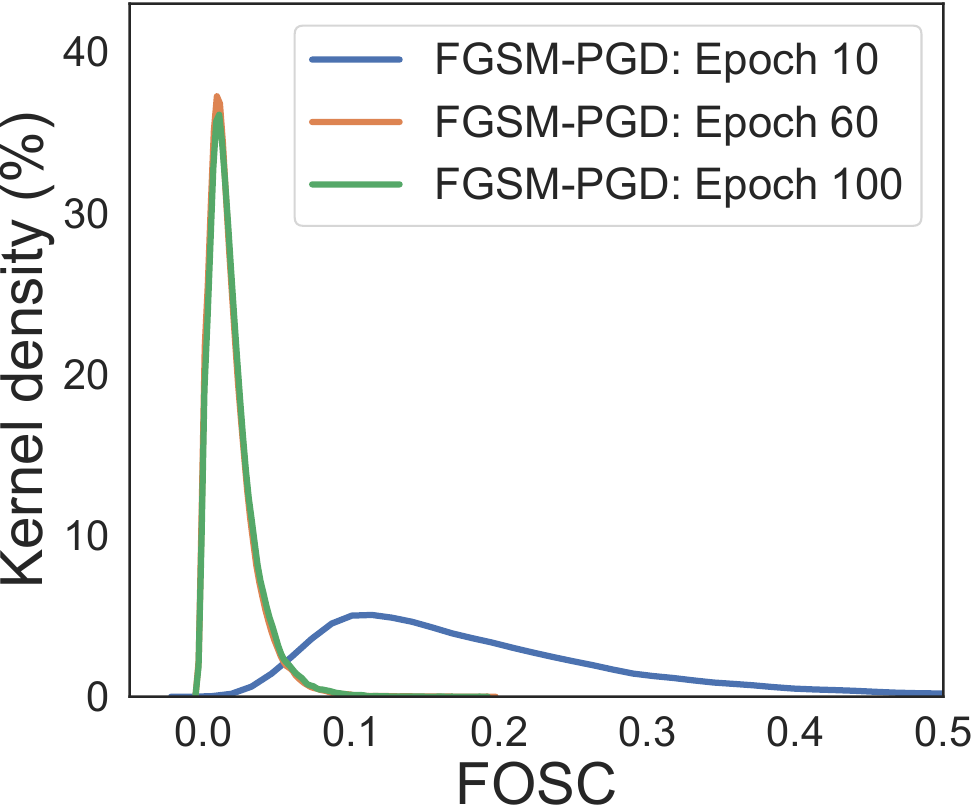}
  \vspace{-0.2 in}
  \caption{FOSC}
  \label{fig:fgsm_pgd_criterion}
\end{subfigure}
\caption{(a): FOSC distribution at intermediate epochs (10, 60, 100) for adversarial training of 10 steps PGD with step size $\epsilon/4$ (PGD-$\epsilon/4$) and step size $\epsilon/8$ (PGD-$\epsilon/8$); (b): The robustness of training with PGD and training with first FGSM then PGD; (c): FOSC distribution at intermediate epochs for training with first FGSM then PGD. Distributions at the $60$-th and $100$-th epochs overlap each other for (a)/(c).}
\label{fig:criterion_advs_train}
\vspace{-0.15in}
\end{figure}

\section{Dynamic Adversarial Training}\label{sec:dynamic}
In this section, we first introduce the proposed dynamic adversarial training strategy. Then we provide a theoretical convergence analysis of the min-max problem in Eq.~\eqref{eq:minimax} of the proposed approach.

\subsection{The Proposed Dynamic Training Strategy}
As mentioned in Section \ref{sec:convergence_view}, training on adversarial examples of better convergence quality at the \textit{later stages} leads to higher robustness. However, at the \textit{early stages}, training on high convergence quality adversarial examples may not be helpful. Recalling the criterion FOSC proposed in Section \ref{sec:criterion_definition}, we have seen that it is strongly correlated with adversarial strength. Thus, it can be used to monitor the strength of adversarial examples at a  fine-grained level. Therefore, we propose to train DNNs with adversarial examples of gradually decreasing FOSC value (increasing convergence quality), so as to ensure that the network is trained on weak adversarial examples at the early stages and strong adversarial examples at the later stages.  

Our proposed dynamic adversarial training algorithm is shown in Algorithm \ref{alg:dynamic}. The dynamic criterion FOSC $c_t = \max(c_{\max} - t \cdot c_{\max}/T', 0)$ controls the minimum FOSC value (maximum adversarial strength) of the adversarial examples at the $t$-th epoch of training ($T'$ is slightly smaller than total epochs $T$ to ensure the later stage can be trained on criterion 0). In the early stages of training, $c_t$ is close to the maximum FOSC value $c_{\max}$ corresponding to weak adversarial examples, it then decreases linearly towards zero as training progresses\footnote{This is only a simple strategy that works well in our experiments and other strategies could also work here.}, and is zero after the $T'$-th epoch of training. We use PGD to generate the training adversarial examples, however, at each perturbation step of PGD, we monitor the FOSC value and stop the perturbation process for adversarial example whose FOSC value is already smaller than $c_t$, enabled by an indicator control vector $V$. $c_{\max}$ can be estimated by the average FOSC value on a batch of weak adversarial examples such as FGSM.

\begin{algorithm}[!htbp]
   \caption{Dynamic Adversarial Training}
   \label{alg:dynamic}
\begin{algorithmic}
   \STATE {\bfseries Input:} Network $h_{\btheta}$, training data $S$, initial model parameters $\btheta^0$, step size $\eta_t$, mini-batch $\cB$, maximum FOSC value $c_{max}$, training epochs $T$, FOSC control epoch $T'$, PGD step $K$, PGD step size $\alpha$, maximum perturbation $\epsilon$.
  \FOR{$t = 0$ {\bfseries to} $T-1$}
    \STATE $c_t = \max(c_{\max} - t \cdot c_{\max}/T', 0)$
       \FOR{each batch $\xb^{0}_{\cB}$}
          \STATE $V = \mathds{1}_{\cB}$ ~~~~~\# \textit{control vector of all elements is 1}
           \WHILE{$\sum V > 0$ \& $k < K$}
           \STATE $\xb^{k+1}_{\cB} = \xb^{k}_{\cB} + V\cdot \alpha \cdot \text{sign}(\nabla_\xb \ell(h_{\btheta}(\xb^{k}_{\cB}), y))$
           \STATE $\xb^{k}_{\cB} = clip(\xb^{k}_{\cB}, \xb^{0}_{\cB} - \epsilon, \xb^{0}_{\cB} + \epsilon)$
           \STATE $V = \mathds{1}_{\cB}(c(\xb^{k}_{1 \cdots \cB}) \leq c_t)$ ~~~~~\# \textit{The element of $V$ becomes 0 at which FOSC is smaller than $c_t$}
           \ENDWHILE
        \STATE $\btheta^{t+1} = \btheta^t - \eta_t \gb(\btheta^t)$  ~~~\# \textit{$\gb(\btheta^t):$ stochastic gradient}
       \ENDFOR
   \ENDFOR
\end{algorithmic}
\end{algorithm}

\subsection{Convergence Analysis}
We provide a convergence analysis of our proposed dynamic adversarial training approach (as opposed to just the inner maximization problem) for solving the overall min-max optimization problem in Eq. \eqref{eq:minimax}.  Due to the nonlinearities in DNNs such as ReLU \cite{nair2010rectified} and max-pooling functions, the exact assumptions of Danskin's theorem \cite{danskin2012theory} do not hold. Nevertheless, given the criterion FOSC that ensures an approximate maximizer of the inner maximization problem, we can still provide a theoretical convergence guarantee. 

In detail, let $\xb_i^*(\btheta) = \argmax_{\xb_i \in \cX_i}f(\btheta,\xb_i)$ where $f(\btheta,\xb) = \ell(h_{\btheta}(\xb),y)$ is a shorthand notation for the classification loss function, $\cX_i = \{\xb | \|\xb - \xb_i^0\|_\infty \leq \epsilon \}$, and $\bbf_i(\btheta) = \max_{\xb_i \in \cX_i} f(\btheta,\xb_i) = f(\btheta,\xb_i^*(\btheta))$, then $\hxb_i(\btheta)$ is a $\delta$-approximate solution to
$\xb_i^*(\btheta)$, if it satisfies that
\begin{align}\label{eq:criterion}
    c(\hxb_i(\btheta)) = \max_{\xb \in \cX_i} \la \xb - \hxb_i(\btheta), \nabla_{\xb} f(\btheta,\hxb_i(\btheta))\ra \leq \delta.
\end{align}
In addition, denote the objective function in Eq. \eqref{eq:minimax} by $L_S(\btheta)$, and its gradient by $\nabla L_S(\btheta) = 1/n \sum_{i=1}^n \nabla \bbf_i(\btheta) = 1/n\sum_{i=1}^n \nabla_{\btheta} f(\btheta,\xb_i^*(\btheta))$. Let $\gb(\btheta) =  1/|\cB|\sum_{i \in \cB}\nabla \bbf_i(\btheta)$ be the stochastic gradient of $L_S(\btheta)$, where $\cB$ is the mini-batch. We have $\EE[\gb(\btheta)] = \nabla L_S(\btheta)$. Let $\nabla_{\btheta} f(\btheta,\hxb(\btheta))$ be the gradient of $f(\btheta,\hxb(\btheta))$ with respect to $\btheta$,
and $\hgb(\theta) = 1/|\cB| \sum_{i\in \cB} \nabla_{\btheta} f(\btheta,\hxb_i(\btheta))$ be the approximate stochastic gradient of $L_S(\btheta)$. 

Before we provide the convergence analysis, we first lay out a few assumptions that are needed for our analysis.

\begin{assumption}\label{assum:gradient Lipschitz}
The function $f(\btheta;\xb)$ satisfies the gradient Lipschitz conditions as follows
\begin{align*}
    \sup_{\xb}\|\nabla_{\btheta} f(\btheta,\xb)-\nabla_{\btheta} f(\btheta',\xb)\|_2 &\leq L_{\theta\theta} \|\btheta-\btheta'\|_2 \\
    \sup_{\btheta}\|\nabla_{\btheta} f(\btheta,\xb)-\nabla_{\btheta} f(\btheta,\xb')\|_2 &\leq L_{\theta x} \|\xb-\xb'\|_2 \\
    \sup_{\xb}\|\nabla_{\xb} f(\btheta,\xb)-\nabla_{\xb} f(\btheta',\xb)\|_2 &\leq L_{x\theta} \|\btheta-\btheta'\|_2,
\end{align*}
where $L_{\theta\theta}, L_{\theta x}, L_{x\theta}$ are positive constants.
\end{assumption}
Assumption \ref{assum:gradient Lipschitz} was made in \citet{sinha2018certifying}, which requires the loss function is smooth in the first and second arguments. While ReLU \cite{nair2010rectified} is non-differentiable, recent studies \citep{allen2018convergence,du2018gradient,zou2018stochastic,cao2019generalization} showed that the loss function of overparamterized deep neural networks is semi-smooth. This helps justify Assumption \ref{assum:gradient Lipschitz}.

\begin{assumption}\label{assum:strongly concave}
$f(\btheta,\xb)$ is locally $\mu$-strongly concave in $\cX_i = \{\xb: \|\xb-\xb_i^0\|_\infty \leq \epsilon\}$ for all $i \in [n]$, i.e., for any $\xb_1,\xb_2 \in \cX_i$, it holds that
\begin{align*}
\scalebox{0.9}{
    $f(\btheta,\xb_1) \leq f(\btheta,\xb_2) + \la\nabla_\xb f(\btheta,\xb_2),\xb_1-\xb_2\ra - \frac{\mu}{2}\|\xb_1-\xb_2\|_2^2$.}
\end{align*}
\end{assumption}
Assumption \ref{assum:strongly concave} can be verified using the relation between robust optimization and distributional robust optimization (refer to \citet{sinha2018certifying,lee2018minimax}). 

\begin{assumption}\label{assum:variance}
The variance of the stochastic gradient $\gb(\btheta)$ is bounded by a constant $\sigma^2>0$, 
\begin{align*}
    \EE[\| \gb(\btheta) - \nabla L_S(
\btheta)\|_2^2] \leq \sigma^2,
\end{align*}
where $\nabla L_S(
\btheta)$ is the full gradient.
\end{assumption}
Assumption \ref{assum:variance} is a common assumption made for the analysis of stochastic gradient based optimization algorithms.

\begin{theorem}\label{thm:main}
Suppose Assumptions \ref{assum:gradient Lipschitz}, \ref{assum:strongly concave} and \ref{assum:variance} hold. Let $\Delta = L_S(\btheta^0) - \min_{\btheta} L_S(\btheta)$. If the step size of the outer minimization is set to $\eta_t = \eta = \sqrt{\Delta/(TL\sigma^2)}$ and $T\geq \Delta L/\sigma^2$. Then the output of Algorithm \ref{alg:dynamic} satisfies
\begin{align*}
    \frac{1}{T} \sum_{t=0}^{T-1} \EE\big[ \|\nabla L_S(\btheta^t)\|_2^2\big] \leq 4\sigma\sqrt{\frac{L\Delta}{T}} + \frac{2 L_{\theta x}^2\delta}{\mu},
\end{align*}
where $L = (L_{\theta x}L_{ x\theta}/\mu+L_{\theta \theta})$.
\end{theorem}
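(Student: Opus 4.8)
The plan is to read Algorithm~\ref{alg:dynamic} as biased stochastic gradient descent on the smooth surrogate $L_S(\btheta)=\frac1n\sum_i\bbf_i(\btheta)$, controlling the bias through the FOSC guarantee \eqref{eq:criterion}. The proof rests on three ingredients: (a) $L_S$ is $L$-smooth with $L=L_{\theta x}L_{x\theta}/\mu+L_{\theta\theta}$; (b) a $\delta$-approximate inner solution in the FOSC sense is $\sqrt{\delta/\mu}$-close to the true maximizer, so the gradient the algorithm actually uses differs from $\nabla L_S$ by at most $L_{\theta x}\sqrt{\delta/\mu}$ in norm; and (c) the usual descent-lemma/telescoping argument for nonconvex SGD, with $\eta$ chosen to balance the optimization and noise terms.

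For (a), I would first show the inner maximizer map $\btheta\mapsto\xb_i^*(\btheta)$ is $(L_{x\theta}/\mu)$-Lipschitz. Writing the first-order optimality (variational) inequalities for $\xb_i^*(\btheta)$ and $\xb_i^*(\btheta')$ over the convex set $\cX_i$, adding them, and combining the $\mu$-strong monotonicity of $-\nabla_\xb f(\btheta,\cdot)$ from Assumption~\ref{assum:strongly concave} with the $L_{x\theta}$-Lipschitz bound of Assumption~\ref{assum:gradient Lipschitz} yields $\mu\|\xb_i^*(\btheta)-\xb_i^*(\btheta')\|_2\le L_{x\theta}\|\btheta-\btheta'\|_2$. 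Then, using the Danskin-type identity $\nabla\bbf_i(\btheta)=\nabla_\btheta f(\btheta,\xb_i^*(\btheta))$ (legitimate because strong concavity makes $\xb_i^*(\btheta)$ unique), inserting $\nabla_\btheta f(\btheta',\xb_i^*(\btheta))$ and applying the $L_{\theta\theta}$ and $L_{\theta x}$ bounds gives $\|\nabla\bbf_i(\btheta)-\nabla\bbf_i(\btheta')\|_2\le(L_{\theta\theta}+L_{\theta x}L_{x\theta}/\mu)\|\btheta-\btheta'\|_2$; averaging over $i$ shows $L_S$ is $L$-smooth.

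For (b), the key inequality is $\|\hxb_i(\btheta)-\xb_i^*(\btheta)\|_2^2\le\delta/\mu$, obtained by invoking Assumption~\ref{assum:strongly concave} twice: applied to the pair $(\hxb_i,\xb_i^*)$ together with optimality $\la\nabla_\xb f(\btheta,\xb_i^*),\hxb_i-\xb_i^*\ra\le 0$ it gives $\tfrac\mu2\|\hxb_i-\xb_i^*\|_2^2\le f(\btheta,\xb_i^*)-f(\btheta,\hxb_i)$, while applied to $(\xb_i^*,\hxb_i)$ together with $c(\hxb_i)\le\delta$ it gives $f(\btheta,\xb_i^*)-f(\btheta,\hxb_i)+\tfrac\mu2\|\hxb_i-\xb_i^*\|_2^2\le\la\nabla_\xb f(\btheta,\hxb_i),\xb_i^*-\hxb_i\ra\le c(\hxb_i)\le\delta$; adding the two yields $\mu\|\hxb_i-\xb_i^*\|_2^2\le\delta$. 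Assumption~\ref{assum:gradient Lipschitz} then bounds $\|\nabla_\btheta f(\btheta,\hxb_i(\btheta))-\nabla\bbf_i(\btheta)\|_2\le L_{\theta x}\sqrt{\delta/\mu}$, hence the batch-averaged approximate gradient obeys $\|\EE[\hgb(\btheta)\mid\btheta]-\nabla L_S(\btheta)\|_2\le L_{\theta x}\sqrt{\delta/\mu}$ and $\EE\|\hgb(\btheta)-\gb(\btheta)\|_2^2\le L_{\theta x}^2\delta/\mu$. For (c), I would apply the descent lemma for the $L$-smooth $L_S$ to $\btheta^{t+1}=\btheta^t-\eta\hgb(\btheta^t)$, take conditional expectation, split the cross term against $\nabla L_S(\btheta^t)$ with Young's inequality using the bias bound, and bound $\EE\|\hgb(\btheta^t)\|_2^2$ by $\|\nabla L_S(\btheta^t)\|_2^2$ plus the bias bound $L_{\theta x}^2\delta/\mu$ plus the variance $\sigma^2$ of Assumption~\ref{assum:variance}. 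Since the prescribed $\eta$ makes $L\eta$ vanish as $T\to\infty$, the $O(L\eta^2\|\nabla L_S(\btheta^t)\|_2^2)$ term is absorbed into $-\Theta(\eta)\|\nabla L_S(\btheta^t)\|_2^2$, leaving a recursion $\EE[L_S(\btheta^{t+1})]\le\EE[L_S(\btheta^t)]-\Theta(\eta)\EE\|\nabla L_S(\btheta^t)\|_2^2+\Theta(\eta)\,L_{\theta x}^2\delta/\mu+\Theta(L\eta^2)(L_{\theta x}^2\delta/\mu+\sigma^2)$; telescoping over $t=0,\dots,T-1$ with $L_S(\btheta^0)-\min_\btheta L_S(\btheta)=\Delta$, dividing by $\Theta(\eta T)$, and plugging in $\eta=\sqrt{2\Delta/(TL(L_{\theta x}^2\delta/\mu+2\sigma^2))}$ (which equates the $\Delta/(\eta T)$ and $L\eta$ contributions) produces the stated bound.

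I expect ingredient (a) to be the main obstacle: one must justify the Danskin-type gradient formula and the Lipschitz continuity of the constrained argmax $\xb_i^*(\btheta)$ carefully, since $\xb_i^*(\btheta)$ typically lies on the boundary of $\cX_i$ (so it is characterized by a variational inequality, not by $\nabla_\xb f=0$) and since Assumption~\ref{assum:strongly concave} supplies only local strong concavity on $\cX_i$. A secondary, purely bookkeeping, difficulty in (c) is dividing the $\|\nabla L_S(\btheta^t)\|_2^2$ budget between the bias cross-term and the $L\eta^2$ second-moment term so that the noise level collapses exactly to $L_{\theta x}^2\delta/\mu+2\sigma^2$ and the leading coefficient in the bound comes out as stated.
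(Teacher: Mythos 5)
Your proposal follows essentially the same route as the paper's proof: the same $L$-smoothness lemma for $L_S$ obtained from the $(L_{x\theta}/\mu)$-Lipschitzness of the constrained argmax, the same bound $\mu\|\hxb_i(\btheta)-\xb_i^*(\btheta)\|_2^2\le\delta$ derived by combining the FOSC condition with the variational optimality of $\xb_i^*(\btheta)$ under Assumption~\ref{assum:strongly concave}, and the same descent-lemma/telescoping argument with the prescribed step size. The decomposition and constants all line up with the paper's Lemmas~\ref{lemma:smoothness} and \ref{lemma:error}, so the plan is correct and not materially different.
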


The complete proof can be found in the supplementary material. Theorem \ref{thm:main} suggests that if the inner maximization is solved up to a precision so that the criterion FOSC is less than $\delta$,  Algorithm \ref{alg:dynamic} can converge to a first-order stationary point at a sublinear rate up to a precision of $ 2L_{\theta x}^2\delta/\mu$. In practice, if $\delta$ is sufficiently small such that $ 2L_{\theta x}^2\delta/\mu$ is small enough, Algorithm \ref{alg:dynamic} can find a robust model $\btheta^T$. This supports the validity of Algorithm \ref{alg:dynamic}.

\subsection{Relation to Curriculum Learning}
Curriculum learning \cite{bengio2009curriculum} is a learning paradigm in which a model learns from easy examples first then gradually learns from more and more difficult examples. For training with normal examples, it has been shown to be able to speed up convergence and improve generalization. This methodology has been adopted in many applications to enhance model training \cite{kumar2010self, jiang2015self}. The main challenge for curriculum learning is to define a proper criterion to determine the difficulty/hardness of training examples, so as to design a learning curriculum (\textit{i.e.}, a sequential ordering) mechanism. Our proposed criterion FOSC in Section \ref{sec:criterion_definition} can serve as such a difficulty measure for training examples, and our proposed dynamic approach can be regarded as one type of curriculum learning.

Curriculum learning was used in adversarial training in \citet{cai2018curriculum}, with the perturbation step of PGD as the difficulty measure. Their assumption is that more perturbation steps indicate stronger adversarial examples. 
However, this is not a reliable assumption from the FOSC view of the inner maximization problem: more steps may overshoot and result in suboptimal adversarial examples. Empirical comparisons with \cite{cai2018curriculum} will be shown in Sec. \ref{sec:experiments}. 

\section{Experiments}\label{sec:experiments}
In this section, we evaluate the robustness of our proposed training strategy (\textit{Dynamic}) compared with several state-of-the-art defense models, in both the white-box and black-box settings on benchmark datasets MNIST and CIFAR-10. We also provide analysis and insights on the robustness of different defense models. For all adversarial examples, we adopt the infinity norm ball as the maximum perturbation constraint \cite{madry2017towards}.

\textbf{Baselines.} 
The baseline defense models we use include  1) \textit{Unsecured}: unsecured training on normal examples; 2) \textit{Standard}: standard adversarial training with PGD attacks \cite{madry2017towards}; 3) \textit{Curriculum}: curriculum adversarial training with PGD attacks of gradually increasing the number of perturbation steps \cite{cai2018curriculum}.

\subsection{Robustness Evaluation}\label{sec:robustness}

\begin{table*}[!t]
    \vspace{-0.1 in}
    \caption{White-box robustness (accuracy (\%) on white-box test attacks) of different defense models on MNIST and CIFAR-10 datasets.}
    \vspace{0.1 in}
    \centering
    \label{tab:white_box_accuracy}
    \scalebox{0.99}[0.99]{
    \begin{tabular*}{\textwidth}{ @{\extracolsep{\fill}} lccccc|ccccc}
    \hline
    \multirow{2}[4]{*}{Defense} &\multicolumn{5}{c}{MNIST} &\multicolumn{5}{c}{CIFAR-10} \\
    \cmidrule(lr){2-6} \cmidrule(lr){7-11}
    & Clean & FGSM & PGD-10 & PGD-20 & C\&W$_{\infty}$ & Clean & FGSM & PGD-10 & PGD-20 & C\&W$_{\infty}$  \\
    \hline
    \textit{Unsecured} & \textbf{99.20} & 14.04 & 0.0 & 0.0 & 0.0 & \textbf{89.39} & 2.2 & 0.0 & 0.0 & 0.0 \\
    \textit{Standard} & 97.61 & 94.71 & 91.21 & 90.62 & 91.03 & 66.31 & 48.65 & 44.39 & 40.02 & 36.33 \\
    \textit{Curriculum} & 98.62 & \textbf{95.51} & 91.24 & 90.65 & 91.12 & 72.40 & 50.47 & 45.54 & 40.12 & 35.77\\
    \textbf{\textit{Dynamic}} & 97.96 & 95.34 & \textbf{91.63} & \textbf{91.27} & \textbf{91.47} & 72.17 & \textbf{52.81} & \textbf{48.06} & \textbf{42.40} & \textbf{37.26} \\
    \hline
    \vspace{-0.3 in}
    \end{tabular*}}
\end{table*}

\begin{table*}[!t]
    \caption{Black-box robustness (accuracy (\%) on black-box test attacks) of different defense models on MNIST and CIFAR-10 datasets.}
    \vspace{0.1 in}
    \centering
    \label{tab:black_box_accuracy}
    \scalebox{0.99}[0.99]{
    \begin{tabular*}{\textwidth}{ @{\extracolsep{\fill}} lcccc|cccc}
    \hline
    \multirow{2}[4]{*}{Defense}
    &\multicolumn{4}{c}{MNIST} &\multicolumn{4}{c}{CIFAR-10} \\
    \cmidrule(lr){2-5} \cmidrule(lr){6-9}
    & FGSM & PGD-10 & PGD-20 & C\&W$_{\infty}$ & FGSM & PGD-10 & PGD-20 & C\&W$_{\infty}$  \\
    \hline
    \textit{Standard} & 96.12 & 95.73 & 95.73 & 97.20 & 65.65 & 65.80 & 65.60 & 66.12\\
    \textit{Curriculum} & 96.59 & 95.87 & 96.09 & 97.52 & 71.25 & 71.44 & 71.13 & 71.94\\
    \textbf{\textit{Dynamic}} & \textbf{97.60} & \textbf{97.01} & \textbf{96.97} & \textbf{98.36} & \textbf{71.95} & \textbf{72.15} & \textbf{72.02} & \textbf{72.85} \\
    \hline
    \vspace{-0.15 in}
    \end{tabular*}}
\end{table*}

\textbf{Defense Settings.} For MNIST, defense models use a 4-layer CNN: 2 convolutional layers followed by 2 dense layers. Batch normalization (BatchNorm) \cite{ioffe2015batch} and max-pooling (MaxPool) are applied after each convolutional layer. For CIFAR-10, defense models adopt an 8-layer CNN architecture: 3 convolutional blocks followed by 2 dense layers, with each convolutional block has 2 convolutional layers. BatchNorm is applied after each convolutional layer, and MaxPool is applied after every convolutional block. Defense models for both MNIST and CIFAR-10 are trained using Stochastic Gradient Descent (SGD) with momentum 0.9, weight decay $10^{-4}$ and an initial learning rate of 0.01. The learning rate is divided by 10 at the 20-th and 40-th epoch for MNIST (50 epochs in total), and at the 60-th and 100-th epoch for CIFAR-10 (120 epochs in total). All images are normalized into [0, 1].

Except the \textit{Unsecured} model, other defense models including our proposed \textit{Dynamic} model are all trained under the same PGD adversarial training scheme: 10-step PGD attack with random start (adding an initial random perturbation of $[-\epsilon, \epsilon]$ to the normal examples before the PGD perturbation) and step size $\epsilon /4$. 
The maximum perturbation is set to $\epsilon=0.3$ for MNIST, and $\epsilon = 8/255$ for CIFAR-10, which is a standard setting for adversarial defense \cite{athalye2018obfuscated, madry2017towards}. 
For \textit{Dynamic} model, we set $c_{max}=0.5$ for both MNIST and CIFAR-10, and $T'=40$ for MNIST and $T'=100$ for CIFAR-10.
Other parameters of the baselines are configured as per their original papers.

\textbf{White-box Robustness.}
For MNIST and CIFAR-10, the attacks used for white-box setting are generated from the original test set images by attacking the defense models using 4 attacking methods: FGSM, PGD-10 (10-step PGD), PGD-20 (20-step PGD), and C\&W$_{\infty}$ ($L_{\infty}$ version of C\&W optimized by PGD for 30 steps). In the white-box setting, all attacking methods have full access to the defense model parameters and are constrained by the same maximum perturbation $\epsilon$. We report the classification accuracy of a defense model under white-box attacks as its white-box robustness.

The white-box results are reported in Table \ref{tab:white_box_accuracy}. On both datasets, the \textit{Unsecured} model achieves the best test accuracy on clean (unperturbed) images. However, it is not robust to adversarial examples --- accuracy drops to zero on strong attacks like PGD-10/20 or C\&W$_\infty$. The proposed \textit{Dynamic} model almost achieves the best robustness among all the defense models. Comparing the robustness on MNIST and CIFAR-10, the improvements are more significant on CIFAR-10. 
This may because MNIST consisting of only black-white digits is a relatively simple dataset where different defense models all work comparably well.
Compared to \textit{Standard} adversarial training, \textit{Dynamic} training with convergence quality controlled adversarial examples improves the robustness to a certain extent, especially on the more challenging CIFAR-10 with natural images. This robustness gain is possibly limited by the capacity of the small model (only an 8-layer CNN).  Thus we shortly show a series of experiments on WideResNet \cite{zagoruyko2016wide} where the power of the \textit{Dynamic} strategy is fully unleashed. In Table \ref{tab:white_box_accuracy}, we see that \textit{Curriculum} improves robustness against weak attacks like FGSM but is less effective against strong attacks like PGD/C\&W$_\infty$.

\textbf{Benchmarking the State-of-the-art on WideResNet.}
To analyze the full power of our proposed \textit{Dynamic} training strategy and also benchmark the state-of-the-art robustness on CIFAR-10, we conduct experiments on a large capacity network WideResNet \cite{zagoruyko2016wide} (10 times wider than standard ResNet \cite{he2016deep}), using the same settings as \citet{madry2017towards}. The WideResNet achieves an accuracy of 95.2\% on clean test images of CIFAR-10. For comparison, we include \textit{Madry's} WideResNet adversarial training and the \textit{Curriculum} model. White-box robustness against FGSM, PGD-20 and C\&W$_{\infty}$ attacks is shown in Table \ref{tab:madry_setting}. Our proposed \textit{Dynamic} model demonstrates a significant boost over \textit{Madry's} WideResNet adversarial training on FGSM and PGD-20, while \textit{Curriculum} model only achieves slight gains respectively. For the strongest attack C\&W$_{\infty}$, \textit{Curriculum}'s robustness decreases by 4\% compared to \textit{Madry's}, while \textit{Dynamic} model achieves the highest robustness.

\begin{table}[!t]
    \vspace{-0.1 in}
    \centering
    \caption{White-box robustness (\%) of different defense models on CIFAR-10 dataset using WideResNet setting in Madry's baselines.}
    \label{tab:madry_setting}
    \vspace{0.1 in}
    \begin{tabular}{l|ccccc}
    \hline
        Defense & Clean & FGSM & PGD-20 & C\&W$_{\infty}$ \\\hline
        \textit{Madry's}  & \textbf{87.3} & 56.1 & 45.8 & 46.8 \\
        \textit{Curriculum} & 77.43 & 57.17 & 46.06 & 42.28 \\
        \textbf{\textit{Dynamic}} & 85.03 & \textbf{63.53} & \textbf{48.70} & \textbf{47.27} \\
    \hline
    \end{tabular}
    \vspace{-0.2 in}
\end{table}

\textbf{Black-box Robustness.} Black-box test attacks are generated on the original test set images by attacking a surrogate model with architecture that is either a copy of the defense model (for MNIST) or a more complex ResNet-50 \cite{he2016deep} model (for CIFAR-10). Both surrogate models are trained separately from the defense models on the original training sets using \textit{Standard} adversarial training (10-step PGD attack with a random start and step size $\epsilon /4$). The attacking methods used here are the same as the white-box evaluation: FGSM, PGD-10, PGD-20, and C\&W$_{\infty}$. 

The robustness of different defense models against black-box attacks is reported in Table \ref{tab:black_box_accuracy}. Again, the proposed \textit{Dynamic} achieves higher robustness than the other defense models. 
\textit{Curriculum} also demonstrates a clear improvement over \textit{Standard} adversarial training. The distinctive robustness boosts of \textit{Dynamic} and \textit{Curriculum} indicate that training with weak adversarial examples at the early stage can improve the final robustness. 

Compared with the white-box robustness in Table \ref{tab:white_box_accuracy}, all defense models achieve higher robustness against black-box attacks, even the CIFAR-10 black-box attacks which are generated based on a much more complex ResNet-50 network (the defense network is only an 8-layer CNN). This implies that black-box attacks are indeed less powerful than white-box attacks, at least for the tested attacks. It is also observed that robustness tends to increase from weak attacks like FGSM to stronger attacks like C\&W$_\infty$. This implies that stronger attacks tend to have less transferability, an observation which is consistent with  \citet{madry2017towards}.

\subsection{Further Analysis}\label{sec:understanding}
\textbf{Different Maximum Perturbation Constraints $\epsilon$.}
We analyze the robustness of defense models \textit{Standard}, \textit{Curriculum} and the proposed \textit{Dynamic}, under different maximum perturbation constraints $\epsilon$ on CIFAR-10. For efficiency, we use the same 8-layer CNN defense architecture as in Sec. \ref{sec:robustness}. We see in Figure \ref{fig:robustness_epsilon} the white-box robustness of defense models trained with $\epsilon = 8/255$ against different PGD-10 attacks with varying $\epsilon \in [2/255, 8/255]$.  \textit{Curriculum} and \textit{Dynamic} models substantially improve the robustness of \textit{Standard} adversarial training, a result consistent with Sec. \ref{sec:robustness}.  \textit{Dynamic} training is better against stronger attacks with larger perturbations ($\epsilon = 8/255$) than \textit{Curriculum}. \textit{Curriculum} is effective on attacks with smaller perturbations ($\epsilon = 4/255, 2/255$), as similar performance to \textit{Dynamic}. We also train the defense models with different $\epsilon \in [2/255, 8/255]$, and then test their white-box robustness under the same $\epsilon$ (all defense models will tend to have similar low robustness if testing $\epsilon$ is larger than training $\epsilon$). As illustrated in Figure \ref{fig:robustness_setting}, training with weak attacks at the early stages might have a limit: the robustness gain tends to decrease when the maximum perturbation decreases to $\epsilon = 2/255$. This is not surprising given the fact that the inner maximization problem of adversarial training becomes more concave and easier to solve given the smaller $\epsilon$-ball. However, robustness for this extremely small scale perturbation is arguably less interesting for secure deep learning.

\begin{figure}[!htbp]
\centering
\begin{subfigure}{.48\linewidth}
  \centering  
  \includegraphics[width=\textwidth]{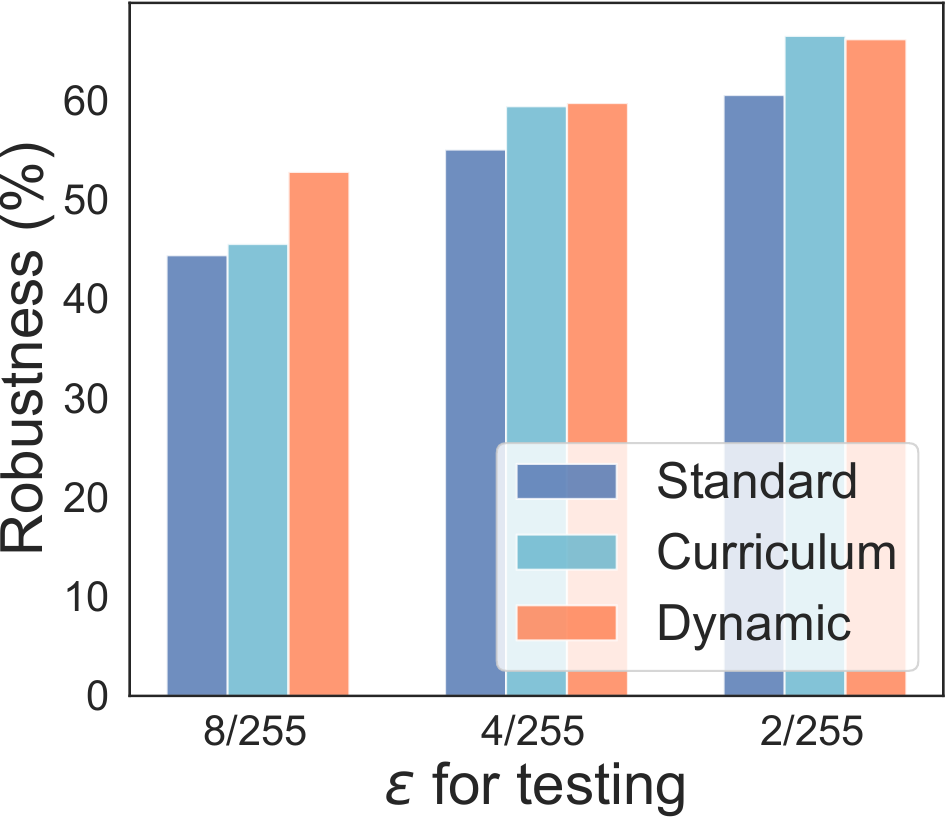}
  \vspace{-0.2 in}
  \caption{Robustness vs. testing $\epsilon$} 
  \label{fig:robustness_epsilon}
\end{subfigure}
\begin{subfigure}{.48\linewidth}
  \centering
  \includegraphics[width=\textwidth]{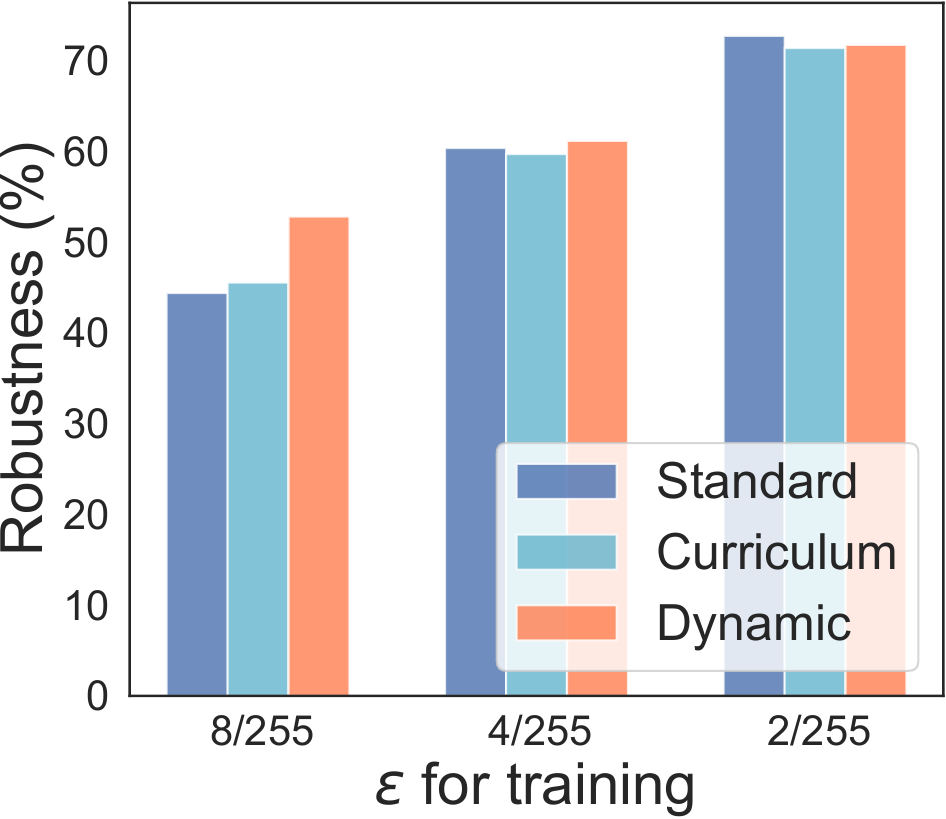}
  \vspace{-0.2 in}
  \caption{Robustness vs. training $\epsilon$} 
  \label{fig:robustness_setting}
\end{subfigure}
\vspace{-0.1 in}
\caption{(a): White-box robustness on PGD-10 attacks with different testing $\epsilon \in [2/255, 8/255]$; (b): White-box robustness of defense models trained on PGD-10 with different training $\epsilon \in [2/255, 8/255]$. 
}
\label{fig:epsilon}
\end{figure}

\textbf{Adversarial Training Process.}
To understand the learning dynamics of the 3 defense models, we plot the distribution of FOSC at different training epochs in Figures \ref{fig:analysis}. We choose epoch 10/60/100 for \textit{Standard} and \textit{Dynamic}, and epoch 60/90/120 for \textit{Curriculum} as it trains without adversarial examples at early epochs. We see both \textit{Curriculum} and \textit{Dynamic} learn with adversarial examples that are of increasing convergence quality (decreasing FOSC). 
The difference is that \textit{Dynamic} has more precise control over the convergence quality due to its use of the proposed criterion FOSC, demonstrating more concentrated FOSC distributions which are more separated at different stages of training. While for \textit{Curriculum}, the convergence quality of adversarial examples generated by the same number of perturbation steps can span a wide range of values (\textit{e.g.} the flat blue line in Figure \ref{fig:criterion_curriculum}), having both weak and strong adversarial examples.
Regarding the later stages of training (epoch 100/120), we see \textit{Dynamic} ends up with the best convergence quality (FOSC density over 40\% in Figure \ref{fig:criterion_dynamic}) followed by \textit{Curriculum} (FOSC density over 30\% in Figure \ref{fig:criterion_curriculum}) and \textit{Standard} (FOSC density less than 30\% in Figure \ref{fig:criterion_standard}), which is well aligned with their final robustness reported in Tables \ref{tab:white_box_accuracy} and \ref{tab:black_box_accuracy}.

\begin{figure}[!t]
\centering
\begin{subfigure}{.32\linewidth}
  \centering
  \includegraphics[width=\textwidth]{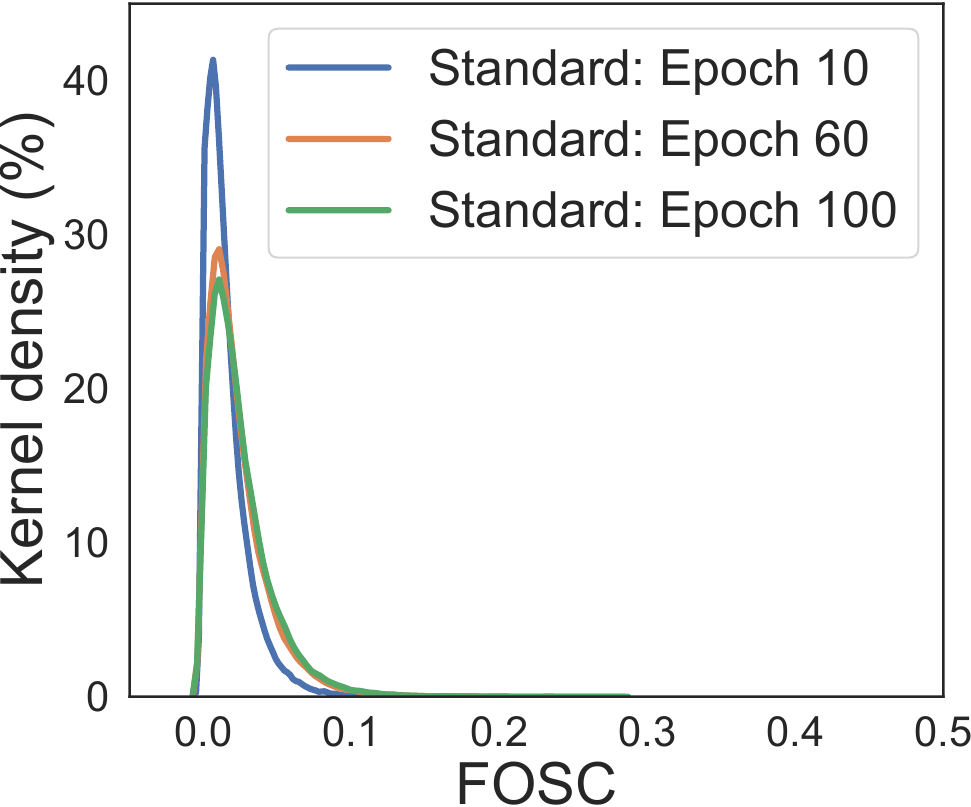}
  \vspace{-0.2 in}
  \caption{\textit{Standard}} 
  \label{fig:criterion_standard}
\end{subfigure}
\begin{subfigure}{.32\linewidth}
  \centering
  \includegraphics[width=\textwidth]{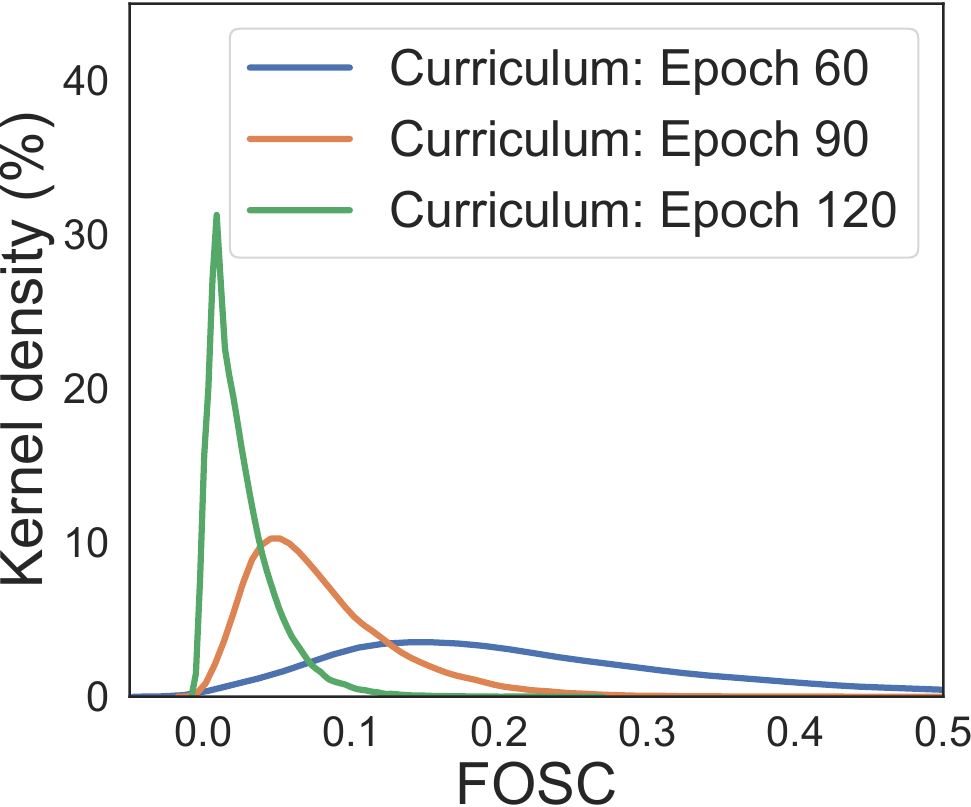}
  \vspace{-0.2 in}
  \caption{\textit{Curriculum}} 
  \label{fig:criterion_curriculum}
\end{subfigure}
\begin{subfigure}{.32\linewidth}
  \centering
  \includegraphics[width=\textwidth]{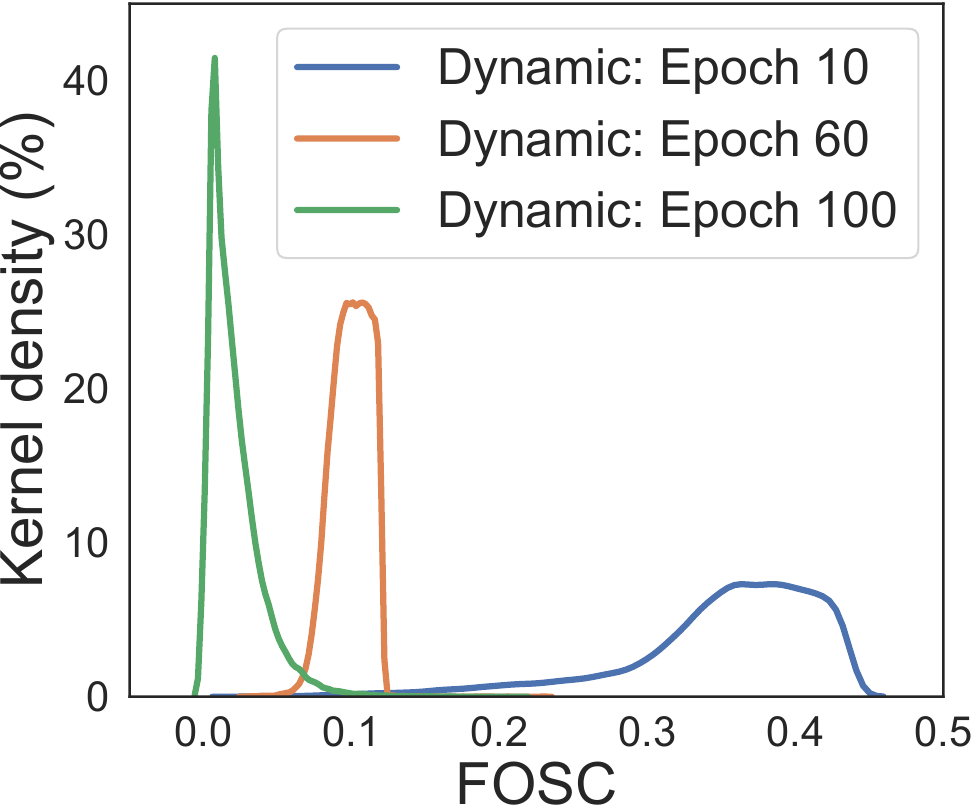}
  \vspace{-0.2 in}
  \caption{\textit{Dynamic}} 
  \label{fig:criterion_dynamic}
\end{subfigure}
\vspace{-0.1 in}
\caption{The distributions of FOSC at different epochs of training on CIFAR-10 with 10-step PGD of step size $\epsilon/4$ and $\epsilon=8/255$. 
}
\label{fig:analysis}
\vspace{-0.15 in}
\end{figure}

\section{Discussion and Conclusion}
In this paper, we proposed a criterion, First-Order Stationary Condition for constrained optimization (FOSC), to measure the convergence quality of adversarial examples found in the inner maximization of adversarial training. The proposed criterion FOSC is well correlated with adversarial strength and is more consistent than the loss. Based on FOSC, we found that higher robustness of adversarial training can be achieved by training on better convergence quality adversarial examples at the \textit{later stages}, rather than at the \textit{early stages}. Following that, we proposed a \textit{dynamic} training strategy and proved the convergence of the proposed approach for the overall min-max optimization problem under certain assumptions. On benchmark datasets, especially on CIFAR-10 under the WideResNet architecture for attacks with maximum perturbation constraint $\epsilon = 8/255$, our proposed dynamic strategy achieved a significant robustness gain against Madry's state-of-the-art baselines. 

Our findings imply that including very hard adversarial examples too early in training possibly inhibits DNN feature learning or encourages premature learning of overly complex features that provide less compression of patterns in the data. Experimental evidences also suggest that the later stages are more correlated with the final robustness, while the early stages are more associated with generalization. Therefore, we conjecture that higher robustness can be obtained by further increasing the diversity of weak adversarial examples in the early stages or generating more powerful adversarial examples in the later stages. The precise characterization of how the early and later stages interact with each other is still an open problem. We believe further exploration of this direction will lead to more robust models. 

\section*{Acknowledgements}
We would like to thank the anonymous reviewers for their helpful comments. We thank Jiaojiao Fan for pointing out a bug in the original proof of Theorem 1.

\bibliography{icml2019}
\bibliographystyle{icml2019}

\newpage
\onecolumn
\appendix

\section{Proof of Theorem \ref{thm:main}}

The proof of Theorem \ref{thm:main} is inspired by \citet{sinha2018certifying}. Before we prove this theorem, we need the following two technical lemmas.

\begin{lemma}\label{lemma:smoothness}
Under Assumptions \ref{assum:gradient Lipschitz} and \ref{assum:strongly concave}, we have $L_S(\btheta)$ is $L$-smooth where $L = L_{\theta x}L_{ x\theta}/\mu+L_{\theta \theta}$, i.e., for any $\btheta_1$ and $\btheta_2$, it holds
\begin{align*}
    L_S(\btheta_1) &\leq L_S(\btheta_2) + \la \nabla L_S(\btheta_2), \btheta_1 - \btheta_2 \ra + \frac{L}{2}\|\btheta_1-\btheta_2\|_2^2,\\
    \|\nabla L_S(\btheta_1)-\nabla L_S(\btheta_2)\|_2 &\leq L \|\btheta_1-\btheta_2\|_2
\end{align*}
\end{lemma}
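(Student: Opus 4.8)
The plan is to prove the Lipschitz-gradient bound $\|\nabla L_S(\btheta_1)-\nabla L_S(\btheta_2)\|_2 \le L\|\btheta_1-\btheta_2\|_2$ first, and then recover the quadratic upper bound (the descent inequality) from it by the standard integration argument: $L_S(\btheta_1)-L_S(\btheta_2)-\la\nabla L_S(\btheta_2),\btheta_1-\btheta_2\ra=\int_0^1\la\nabla L_S(\btheta_2+t(\btheta_1-\btheta_2))-\nabla L_S(\btheta_2),\btheta_1-\btheta_2\ra\,dt$, which is at most $(L/2)\|\btheta_1-\btheta_2\|_2^2$ once the gradient is $L$-Lipschitz. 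Since $\nabla L_S(\btheta)=\frac1n\sum_{i=1}^n\nabla\bbf_i(\btheta)$ and a convex combination of $L$-Lipschitz maps is again $L$-Lipschitz, it suffices to show each $\bbf_i$ has $L$-Lipschitz gradient.

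Fix $i$ and write $\xb^*(\btheta):=\xb_i^*(\btheta)=\argmax_{\xb\in\cX_i}f(\btheta,\xb)$. This maximizer is well defined and unique because $f(\btheta,\cdot)$ is $\mu$-strongly concave on the convex compact set $\cX_i$ (Assumption~\ref{assum:strongly concave}), and, under Assumptions~\ref{assum:gradient Lipschitz} and \ref{assum:strongly concave}, Danskin's theorem applies and yields $\nabla\bbf_i(\btheta)=\nabla_\btheta f(\btheta,\xb^*(\btheta))$. The key intermediate step is to establish that $\xb^*(\cdot)$ is itself Lipschitz, namely $\|\xb^*(\btheta_1)-\xb^*(\btheta_2)\|_2\le (L_{x\theta}/\mu)\|\btheta_1-\btheta_2\|_2$. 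I would obtain this from the first-order optimality (variational) inequality for the constrained maximum, $\la\nabla_\xb f(\btheta,\xb^*(\btheta)),\xb-\xb^*(\btheta)\ra\le 0$ for all $\xb\in\cX_i$: write it at $\btheta_1$ with test point $\xb^*(\btheta_2)$, write it at $\btheta_2$ with test point $\xb^*(\btheta_1)$, add the two inequalities, split $\nabla_\xb f(\btheta_1,\xb^*(\btheta_1))-\nabla_\xb f(\btheta_2,\xb^*(\btheta_2))$ into a $\btheta$-increment $\nabla_\xb f(\btheta_1,\xb^*(\btheta_1))-\nabla_\xb f(\btheta_2,\xb^*(\btheta_1))$ and an $\xb$-increment $\nabla_\xb f(\btheta_2,\xb^*(\btheta_1))-\nabla_\xb f(\btheta_2,\xb^*(\btheta_2))$, control the $\xb$-part with the strong-concavity inequality of Assumption~\ref{assum:strongly concave} (which gives $\le -\mu\|\xb^*(\btheta_1)-\xb^*(\btheta_2)\|_2^2$) and the $\btheta$-part by Cauchy--Schwarz together with the $L_{x\theta}$ bound of Assumption~\ref{assum:gradient Lipschitz}; dividing through by $\|\xb^*(\btheta_1)-\xb^*(\btheta_2)\|_2$ gives the claimed Lipschitz constant.

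With the Lipschitz maximizer in hand, I combine
$\|\nabla\bbf_i(\btheta_1)-\nabla\bbf_i(\btheta_2)\|_2\le\|\nabla_\btheta f(\btheta_1,\xb^*(\btheta_1))-\nabla_\btheta f(\btheta_2,\xb^*(\btheta_1))\|_2+\|\nabla_\btheta f(\btheta_2,\xb^*(\btheta_1))-\nabla_\btheta f(\btheta_2,\xb^*(\btheta_2))\|_2$,
bound the first term by $L_{\theta\theta}\|\btheta_1-\btheta_2\|_2$ and the second by $L_{\theta x}\|\xb^*(\btheta_1)-\xb^*(\btheta_2)\|_2$ (both via Assumption~\ref{assum:gradient Lipschitz}), and substitute the $\xb^*$-Lipschitz bound to reach $\|\nabla\bbf_i(\btheta_1)-\nabla\bbf_i(\btheta_2)\|_2\le(L_{\theta\theta}+L_{\theta x}L_{x\theta}/\mu)\|\btheta_1-\btheta_2\|_2=L\|\btheta_1-\btheta_2\|_2$. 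Averaging over $i$ and applying the integration argument above then gives both displayed inequalities of Lemma~\ref{lemma:smoothness}.

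The step I expect to be the main obstacle is the rigorous justification of the Danskin/envelope identity $\nabla\bbf_i(\btheta)=\nabla_\btheta f(\btheta,\xb^*(\btheta))$, i.e., that $\bbf_i$ is differentiable with exactly this gradient: as the paper itself notes, the literal hypotheses of Danskin's theorem fail for ReLU networks, so I would lean on Assumptions~\ref{assum:gradient Lipschitz}--\ref{assum:strongly concave}, which together guarantee a unique, continuous (indeed Lipschitz) maximizer and joint smoothness of $f$, to invoke the envelope theorem in its standard form. The remaining pieces --- the variational-inequality manipulation, the triangle-inequality bookkeeping, and the passage from Lipschitz gradient to the quadratic bound --- are routine.
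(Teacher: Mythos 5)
Your proposal is correct and follows essentially the same route as the paper: both hinge on showing the maximizer map $\btheta \mapsto \xb_i^*(\btheta)$ is $(L_{x\theta}/\mu)$-Lipschitz by combining the first-order optimality (variational) inequalities at $\btheta_1$ and $\btheta_2$ with the strong concavity of $f(\btheta,\cdot)$, Cauchy--Schwarz, and the $L_{x\theta}$ bound, and then conclude via the triangle inequality with the $L_{\theta\theta}$ and $L_{\theta x}$ constants. The only differences are cosmetic (you invoke strong concavity in its gradient-monotonicity form rather than adding two function-value inequalities, and you spell out the integration step and the Danskin identity $\nabla\bbf_i(\btheta)=\nabla_\btheta f(\btheta,\xb_i^*(\btheta))$, which the paper takes for granted), so no gap to report.
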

\begin{proof}
By Assumption \ref{assum:strongly concave}, we have for any $\btheta_1, \btheta_2$, and $\xb_i^*(\btheta_1), \xb_i^*(\btheta_2)$, we have
\begin{align}\label{eq:Gu0001}
     f(\btheta_2,\xb_i^*(\btheta_1)) &\leq  f(\btheta_2,\xb_i^*(\btheta_2)) + \la \nabla_{\xb} f(\btheta_2,\xb_i^*(\btheta_2)), \xb_i^*(\btheta_1)-\xb_i^*(\btheta_2)\ra - \frac{\mu}{2} \|\xb_i^*(\btheta_1) - \xb_i^*(\btheta_2)\|_2^2\nonumber\\
    & \leq f(\btheta_2,\xb_i^*(\btheta_2))- \frac{\mu}{2} \|\xb_i^*(\btheta_1) - \xb_i^*(\btheta_2)\|_2^2,
\end{align}
where the inequality follows from $\la \nabla_{\xb} f(\btheta_2,\xb_i^*(\btheta_2)), \xb_i^*(\btheta_1)-\xb_i^*(\btheta_2)\ra \leq 0$.
In addition, we have
\begin{align}\label{eq:Gu0002}
    f(\btheta_2,\xb_i^*(\btheta_2)) \leq  f(\btheta_2,\xb_i^*(\btheta_1)) + \la \nabla_{\xb} f(\btheta_2,\xb_i^*(\btheta_1)), \xb_i^*(\btheta_2)-\xb_i^*(\btheta_1)\ra - \frac{\mu}{2} \|\xb_i^*(\btheta_1) - \xb_i^*(\btheta_2)\|_2^2
\end{align}
Combining \eqref{eq:Gu0001} and \eqref{eq:Gu0002}, we obtain
\begin{align}\label{eq:Gu0003}
    \mu \|\xb_i^*(\btheta_1) - \xb_i^*(\btheta_2)\|_2^2 &\leq \la \nabla_{\xb} f(\btheta_2,\xb_i^*(\btheta_1)),\xb_i^*(\btheta_2)-\xb_i^*(\btheta_1) \ra\nonumber\\
    &\leq \la \nabla_{\xb} f(\btheta_2,\xb_i^*(\btheta_1))-\nabla_{\xb} f(\btheta_1,\xb_i^*(\btheta_1)),\xb_i^*(\btheta_2)-\xb_i^*(\btheta_1) \ra\nonumber\\
    &\leq \| \nabla_{\xb} f(\btheta_2,\xb_i^*(\btheta_1))-\nabla_{\xb} f(\btheta_1,\xb_i^*(\btheta_1))\|_2 \|\xb_i^*(\btheta_2)-\xb_i^*(\btheta_1) \|_2\nonumber\\
    &\leq L_{x\theta}\|\btheta_2 - \btheta_1\|_2 \|\xb_i^*(\btheta_2)-\xb_i^*(\btheta_1) \|_2
\end{align}
where the second inequality holds because $\la \nabla_{\xb} f(\btheta_1,\xb_i^*(\btheta_1)), \xb_i^*(\btheta_2)-\xb_i^*(\btheta_1)\ra \leq 0$, the third inequality follows from Cauchy–Schwarz inequality, and the last inequality holds due to Assumption \ref{assum:gradient Lipschitz}.  
\eqref{eq:Gu0003} immediately yields
\begin{align}\label{eq:Gu0004}
    \|\xb_i^*(\btheta_1) - \xb_i^*(\btheta_2)\|_2 \leq \frac{L_{x\theta}}{\mu}\|\btheta_2 - \btheta_1\|_2.
\end{align}
Then we have for $i \in [n]$,
\begin{align}\label{eq:Gu0004}
    \|\nabla_\btheta f(\btheta_1,\xb_i^*(\btheta_1)) - \nabla_\btheta f(\btheta_2,\xb_i^*(\btheta_2))\|_2 &\leq \|\nabla_\btheta f(\btheta_1,\xb_i^*(\btheta_1))-\nabla_\btheta f(\btheta_1,\xb_i^*(\btheta_2))\|_2 \nonumber\\
    &\qquad+ \|\nabla_\btheta f(\btheta_1,\xb_i^*(\btheta_2))-\nabla_\btheta f(\btheta_2,\xb_i^*(\btheta_2))\|_2\nonumber\\
    &\leq L_{\theta x} \|\xb_i^*(\btheta_1)-\xb_i^*(\btheta_2)\|_2 + L_{\theta \theta} \|\btheta_1-\btheta_2\|_2\nonumber\\
    &= \bigg(\frac{L_{\theta x}L_{ x\theta}}{\mu}+L_{\theta \theta}\bigg)\|\btheta_1-\btheta_2\|_2
\end{align}
where the first inequality follows from triangle inequality, the second inequality holds due to Assumption \ref{assum:gradient Lipschitz}, and the last inequality is due to \eqref{eq:Gu0004}. 
Finally, by the definition of $L_S(\btheta)$, we have
\begin{align*}
\|\nabla L_S(\btheta_1) - \nabla L_S(\btheta_2)\|_2
    & \leq \bigg\|\frac{1}{n}\sum_{i=1}^n\nabla_\btheta f(\btheta_1,\xb_i^*(\btheta_1)) - \frac{1}{n}\sum_{i=1}^n\nabla_\btheta f(\btheta_2,\xb_i^*(\btheta_2))\bigg\|_2\\
    & \leq \frac{1}{n}\sum_{i=1}^n \|\nabla_\btheta f(\btheta_1,\xb_i^*(\btheta_1))-\nabla_\btheta f(\btheta_2,\xb_i^*(\btheta_2))\|_2\\
    &\leq \bigg(\frac{L_{\theta x}L_{ x\theta}}{\mu}+L_{\theta \theta}\bigg)\|\btheta_1-\btheta_2\|_2,
\end{align*}
where the last inequality follows from \eqref{eq:Gu0004}.
This completes the proof.
\end{proof}



\begin{lemma}\label{lemma:error}
Under Assumptions \ref{assum:gradient Lipschitz} and \ref{assum:strongly concave}, the approximate stochastic gradient $\hgb(\theta)$ satisfies
\begin{align}
    \|\hgb(\btheta) - \gb(\btheta)\|_2 \leq L_{\theta x} \sqrt{\frac{\delta}{\mu}}.
\end{align}
\end{lemma}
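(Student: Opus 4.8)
The plan is to bound the difference between the exact stochastic gradient $\gb(\btheta) = \frac{1}{|\cB|}\sum_{i\in\cB}\nabla_\btheta f(\btheta,\xb_i^*(\btheta))$ and the approximate one $\hgb(\btheta) = \frac{1}{|\cB|}\sum_{i\in\cB}\nabla_\btheta f(\btheta,\hxb_i(\btheta))$ by controlling, term by term, the distance $\|\xb_i^*(\btheta) - \hxb_i(\btheta)\|_2$, and then invoking the $\btheta x$-Lipschitz gradient condition from Assumption \ref{assum:gradient Lipschitz}. So the whole argument reduces to the claim that the FOSC bound $c(\hxb_i(\btheta))\le\delta$ forces $\hxb_i(\btheta)$ to be within $\sqrt{\delta/\mu}$ of the true maximizer $\xb_i^*(\btheta)$.

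To prove that key distance bound, I would use strong concavity of $f(\btheta,\cdot)$ on $\cX_i$ (Assumption \ref{assum:strongly concave}) together with the first-order optimality of $\xb_i^*(\btheta)$. Applying the strong-concavity inequality with $\xb_1 = \hxb_i(\btheta)$ and $\xb_2 = \xb_i^*(\btheta)$ gives
\begin{align*}
    f(\btheta,\hxb_i(\btheta)) \leq f(\btheta,\xb_i^*(\btheta)) + \la \nabla_\xb f(\btheta,\xb_i^*(\btheta)), \hxb_i(\btheta) - \xb_i^*(\btheta)\ra - \frac{\mu}{2}\|\hxb_i(\btheta) - \xb_i^*(\btheta)\|_2^2,
\end{align*}
and since $\xb_i^*(\btheta)$ maximizes $f(\btheta,\cdot)$ over $\cX_i$, the first-order condition $\la \nabla_\xb f(\btheta,\xb_i^*(\btheta)), \xb - \xb_i^*(\btheta)\ra \le 0$ for all $\xb\in\cX_i$ kills the inner-product term, leaving $\frac{\mu}{2}\|\hxb_i(\btheta)-\xb_i^*(\btheta)\|_2^2 \le f(\btheta,\xb_i^*(\btheta)) - f(\btheta,\hxb_i(\btheta))$. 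The other direction comes from the definition of FOSC: since $\xb_i^*(\btheta)\in\cX_i$ is a feasible competitor in the max defining $c(\hxb_i(\btheta))$, we get $\la \xb_i^*(\btheta) - \hxb_i(\btheta), \nabla_\xb f(\btheta,\hxb_i(\btheta))\ra \le c(\hxb_i(\btheta)) \le \delta$; combining this with concavity of $f(\btheta,\cdot)$ (the first-order upper bound) yields $f(\btheta,\xb_i^*(\btheta)) - f(\btheta,\hxb_i(\btheta)) \le \la \nabla_\xb f(\btheta,\hxb_i(\btheta)), \xb_i^*(\btheta) - \hxb_i(\btheta)\ra \le \delta$. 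Chaining the two gives $\frac{\mu}{2}\|\hxb_i(\btheta)-\xb_i^*(\btheta)\|_2^2 \le \delta$, i.e. $\|\hxb_i(\btheta)-\xb_i^*(\btheta)\|_2 \le \sqrt{2\delta/\mu}$. (I note the lemma states $\sqrt{\delta/\mu}$ without the factor $2$; the $\mu$-strong-concavity bound may be stated there with an implicit $\mu/2 \to \mu$ convention, or one absorbs the constant — I would double-check which normalization of strong concavity is in force, since that is the only place the constant enters.)

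With the distance bound in hand, the rest is immediate:
\begin{align*}
    \|\hgb(\btheta) - \gb(\btheta)\|_2 = \Bigl\| \frac{1}{|\cB|}\sum_{i\in\cB} \bigl(\nabla_\btheta f(\btheta,\hxb_i(\btheta)) - \nabla_\btheta f(\btheta,\xb_i^*(\btheta))\bigr)\Bigr\|_2 \le \frac{1}{|\cB|}\sum_{i\in\cB} L_{\theta x}\|\hxb_i(\btheta)-\xb_i^*(\btheta)\|_2 \le L_{\theta x}\sqrt{\frac{\delta}{\mu}},
\end{align*}
using the triangle inequality and the second Lipschitz inequality of Assumption \ref{assum:gradient Lipschitz}.

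The main obstacle is really the sharpness of the constant: getting from the two chained inequalities to exactly $\sqrt{\delta/\mu}$ rather than $\sqrt{2\delta/\mu}$ requires either the strong-concavity modulus to be defined without the $\tfrac12$, or a slightly tighter argument — and one must be careful that $\xb_i^*(\btheta)$ is genuinely the global maximizer on the convex set $\cX_i$ (so that the first-order optimality condition and the concave first-order upper bound both hold globally on $\cX_i$, which is exactly what Assumption \ref{assum:strongly concave}'s \emph{local} strong concavity on all of $\cX_i$ provides). Everything else — the triangle inequality over the mini-batch and the application of $L_{\theta x}$-smoothness — is routine.
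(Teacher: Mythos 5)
Your overall strategy is the same as the paper's: reduce the claim to a bound on $\|\hxb_i(\btheta)-\xb_i^*(\btheta)\|_2$, then finish with the triangle inequality over the mini-batch and the $L_{\theta x}$-Lipschitz condition from Assumption \ref{assum:gradient Lipschitz}. The one genuine issue is the constant, and your own suspicion about it resolves against you: Assumption \ref{assum:strongly concave} is stated with the standard modulus $\tfrac{\mu}{2}\|\xb_1-\xb_2\|_2^2$, so your chain of inequalities proves only $\|\hxb_i(\btheta)-\xb_i^*(\btheta)\|_2\le\sqrt{2\delta/\mu}$ and hence $\|\hgb(\btheta)-\gb(\btheta)\|_2\le L_{\theta x}\sqrt{2\delta/\mu}$, which misses the stated bound by a factor of $\sqrt{2}$. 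The loss happens in your second step, where you upper-bound $f(\btheta,\xb_i^*(\btheta))-f(\btheta,\hxb_i(\btheta))$ using only plain concavity and thereby throw away a second quadratic term.

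The fix is one line, and there are two equivalent ways to do it. Either keep the quadratic term in the second inequality,
\begin{align*}
f(\btheta,\xb_i^*(\btheta)) &\le f(\btheta,\hxb_i(\btheta)) + \la \nabla_\xb f(\btheta,\hxb_i(\btheta)),\, \xb_i^*(\btheta)-\hxb_i(\btheta)\ra - \tfrac{\mu}{2}\|\xb_i^*(\btheta)-\hxb_i(\btheta)\|_2^2 \\
&\le f(\btheta,\hxb_i(\btheta)) + \delta - \tfrac{\mu}{2}\|\xb_i^*(\btheta)-\hxb_i(\btheta)\|_2^2,
\end{align*}
and add this to your first inequality $\tfrac{\mu}{2}\|\xb_i^*(\btheta)-\hxb_i(\btheta)\|_2^2 \le f(\btheta,\xb_i^*(\btheta))-f(\btheta,\hxb_i(\btheta))$ to conclude $\mu\|\xb_i^*(\btheta)-\hxb_i(\btheta)\|_2^2\le\delta$; or bypass function values entirely, as the paper does: summing the two strong-concavity inequalities with the roles of $\xb_1,\xb_2$ swapped yields the strongly monotone gradient inequality $\mu\|\hxb_i(\btheta)-\xb_i^*(\btheta)\|_2^2 \le \la \nabla_\xb f(\btheta,\xb_i^*(\btheta)) - \nabla_\xb f(\btheta,\hxb_i(\btheta)),\, \hxb_i(\btheta)-\xb_i^*(\btheta)\ra$, whose right-hand side is at most $\delta$ by exactly the two facts you already invoke (the FOSC bound with feasible competitor $\xb_i^*(\btheta)$, and the first-order optimality of $\xb_i^*(\btheta)$ tested against $\hxb_i(\btheta)$). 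With either repair your argument coincides with the paper's proof; everything else you wrote is correct.
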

\begin{proof}
We have
\begin{align}\label{eq:Gu0010}
    \|\hgb(\btheta) - \gb(\btheta)\|_2 &= \bigg\|\frac{1}{|\cB|}\sum_{i\in \cB} (\nabla_{\btheta} f(\btheta,\hxb_i(\btheta)) - \nabla \bbf_i(\btheta))\bigg\|_2\nonumber\\
    &\leq \frac{1}{|\cB|}\sum_{i\in \cB} \big\| \nabla_{\btheta} f(\btheta,\hxb_i(\btheta)) - \nabla_{\btheta} f(\btheta,\xb_i^*(\btheta))\big\|_2\nonumber\\
    &\leq \frac{1}{|\cB|}\sum_{i\in \cB} L_{\theta x}\|\hxb_i(\btheta) - \xb_i^*(\btheta)\|_2,
\end{align}
where the first inequality follows from triangle inequality, and the second inequality holds due to Assumption \ref{assum:gradient Lipschitz}.
By Assumption \ref{assum:strongly concave}, we have for any $\btheta$, and $\xb_i^*(\btheta), \hxb_i(\btheta)$, we have
\begin{align}\label{eq:Gu0005}
    \mu \|\xb_i^*(\btheta) - \hxb_i(\btheta)\|_2^2 \leq  \la \nabla_{\xb} f(\btheta,\xb_i^*(\btheta)) - \nabla_{\xb} f(\btheta,\hxb_i(\btheta)), \hxb_i(\btheta)-\xb_i^*(\btheta)\ra. 
\end{align}
Since $\hxb_i(\btheta)$ is a $\delta$-approximate maximizer of $f(\btheta,\hxb_i(\btheta))$, we have
\begin{align}\label{eq:Gu0006}
    \la \xb_i^*(\btheta) - \hxb_i(\btheta), \nabla_{\btheta} f(\btheta,\hxb_i(\btheta))\ra \leq \delta.
\end{align}
In addition, we have 
\begin{align}\label{eq:Gu0007}
    \la \hxb_i(\btheta)-\xb_i^*(\btheta), \nabla_{\xb} f(\btheta,\xb_i^*(\btheta)) \ra \leq 0.
\end{align}
Combining \eqref{eq:Gu0006} and \eqref{eq:Gu0007} gives rise to
\begin{align}\label{eq:Gu0008}
    \la \hxb_i(\btheta)-\xb_i^*(\btheta), \nabla_{\xb} f(\btheta,\xb_i^*(\btheta)) - \nabla_{\btheta} f(\btheta,\hxb_i(\btheta)) \ra \leq \delta.
\end{align}
Substitute \eqref{eq:Gu0008} into \eqref{eq:Gu0005}, we obtain
\begin{align*}
    \mu \|\xb_i^*(\btheta) - \hxb_i(\btheta)\|_2^2  \leq \delta,
\end{align*}
which immediately yields
\begin{align}\label{eq:Gu0009}
    \|\xb_i^*(\btheta) - \hxb_i(\btheta)\|_2 \leq \sqrt{\frac{\delta}{\mu}}.
\end{align}
Substitute \eqref{eq:Gu0009} into \eqref{eq:Gu0010}, we obtain
\begin{align*}
    \|\hgb(\btheta) - \gb(\btheta)\|_2 \leq L_{\theta x} \sqrt{\frac{\delta}{\mu}},
\end{align*}
which completes the proof.
\end{proof}

Now we are ready to prove Theorem \ref{thm:main}. 
\begin{proof}[Proof of Theorem \ref{thm:main}]
Let $\bar f(\btheta) = 1/n\sum_{i=1}^n\min_{\xb_i} f(\btheta,\xb_i) = 1/n\sum_{i=1}^n f(\btheta,\xb_i^*)$. By Lemma \ref{lemma:smoothness}, we have
\begin{align*}
    L_S(\btheta^{t+1}) &\leq L_S(\btheta^t) + \la \nabla L_S(\btheta^{t}), \btheta^{t+1} - \btheta^t \ra + \frac{L}{2} \|\btheta^{t+1}-\btheta^t\|_2^2 \\
    &= L_S(\btheta^t) -\eta_t \|\nabla L_S(\btheta^{t})\|_2^2  + \frac{L\eta_t^2}{2} \|\hgb(\btheta^t)\|_2^2 +\eta_t \la \nabla L_S(\btheta^{t}), \nabla L_S(\btheta^{t}) - \hgb(\btheta^{t}) \ra\\
     & = L_S(\btheta^t) -\eta_t\bigg(1-\frac{L\eta_t}{2}\bigg) \|\nabla L_S(\btheta^{t})\|_2^2  + \eta_t\bigg(1-L\eta_t\bigg)\la \nabla L_S(\btheta^{t}), \nabla L_S(\btheta^{t}) - \hgb(\btheta^{t}) \ra\\
     &\qquad + \frac{L\eta_t^2}{2} \|\hgb(\btheta^t) - \nabla L_S(\btheta^t)\|_2^2\\
     &= L_S(\btheta^t) -\eta_t\bigg(1-\frac{L\eta_t}{2}\bigg) \|\nabla L_S(\btheta^{t})\|_2^2  + \eta_t\bigg(1-L\eta_t\bigg)\la \nabla L_S(\btheta^{t}), \gb(\btheta^t) - \hgb(\btheta^{t}) \ra\\ 
     &\qquad + \eta_t\bigg(1-L\eta_t\bigg)\la \nabla L_S(\btheta^{t}), \nabla L_S(\btheta^{t}) -\gb(\btheta^t)  \ra
      + \frac{L\eta_t^2}{2} \|\hgb(\btheta^t) -\gb(\btheta^t)+ \gb(\btheta^t)- \nabla L_S(\btheta^t)\|_2^2\\
      &\leq L_S(\btheta^t) -\frac{\eta_t}{2} \|\nabla L_S(\btheta^{t})\|_2^2  + \frac{\eta_t}{2}\bigg(1-L\eta_t\bigg)\|\hgb(\btheta^t) -\gb(\btheta^t)\|_2^2 \\
     &\qquad + \eta_t\bigg(1-L\eta_t\bigg)\la \nabla L_S(\btheta^{t}), \nabla L_S(\btheta^{t}) -\gb(\btheta^t)  \ra
      + L\eta_t^2\big( \|\hgb(\btheta^t) -\gb(\btheta^t)\|_2^2 + \|\gb(\btheta^t)- \nabla L_S(\btheta^t)\|_2^2\big),
\end{align*}
where the last inequality is due to the Young's inequality. Note that we have $\eta_t \leq 1/L$ because we choose $\eta_t = \eta =  \sqrt{\Delta/(TL\sigma^2)}$ and $T \geq (\Delta L)/\sigma^2$.
Taking expectation on both sides of the above inequality conditioned on $\btheta^t$, we have
\begin{align}\label{eq:induction}
    \EE [L_S(\btheta^{t+1}) - L_S(\btheta^t) | \btheta^t] &\leq -\frac{\eta_t}{2} \|\nabla L_S(\btheta^{t})\|_2^2 + \frac{\eta_t}{2}\bigg(1+L \eta_t\bigg)  \frac{L_{\theta x}^2\delta}{\mu} + L\eta_t^2 \sigma^2\notag\\
    &\leq -\frac{\eta_t}{2} \|\nabla L_S(\btheta^{t})\|_2^2 + \eta_t \frac{L_{\theta x}^2\delta}{\mu} + L\eta_t^2 \sigma^2,
\end{align}
where the first inequality uses the fact that $\EE[\gb(\btheta^t)] = \nabla L_S(\btheta^t)$, Assumption \ref{assum:variance}, and Lemma \ref{lemma:error}, and the second inequality uses the fact that $\eta_t \leq 1/L$.
Taking telescope sum of \eqref{eq:induction} over $t=0,\ldots, T-1$, we obtain that
\begin{align*}
     \sum_{t=0}^{T-1} \frac{\eta_t}{2} \EE\big[\|\nabla L_S(\btheta^{t})\|_2^2\big]\leq \EE[ L_S(\btheta^0)-L_S(\btheta^T)]  + \sum_{t=0}^{T-1} \eta_t \frac{L_{\theta x}^2\delta}{\mu} + L\sum_{t=0}^{T-1}\eta_t^2 \sigma^2.
\end{align*}
Recall that $\eta_t = \eta =  \sqrt{\Delta/(TL\sigma^2)}$ where $L = L_{\theta x}L_{ x\theta}/\mu+L_{\theta \theta}$, we can show that 
\begin{align*}
    \frac{1}{T}\sum_{t=0}^{T-1}\EE\big[\|\nabla L_S(\btheta^{t})\|_2^2\big] \leq 4\sigma\sqrt{\frac{ L\Delta}{T}} + \frac{ 2L_{\theta x}^2\delta}{\mu}.
\end{align*}

This completes the proof.
\end{proof}

\end{document}